\renewcommand\footnotetextcopyrightpermission[1]{} 
\newcommand{\nosection}[1]{\vspace{1.5pt}\noindent\textbf{#1.}}
  \providecommand\BibTeX{{%
    \normalfont B\kern-0.5em{\scshape i\kern-0.25em b}\kern-0.8em\TeX}}}
\begin{document}

\title{Scalable and Sparsity-Aware Privacy-Preserving K-means Clustering with Application to Fraud Detection}

\author{Yingting Liu}
\affiliation{%
  \institution{Ant Group}
  \city{Hangzhou}
  \country{China}
}
\email{yingting@antgroup.com}

\author{Chaochao Chen}
\authornote{Chaochao Chen is the corresponding author.}
\affiliation{%
  \institution{Zhejiang University}
  \city{Hangzhou}
  \country{China}
}
\email{zjuccc@zju.edu.cn}

\author{Jamie Cui}
\affiliation{%
  \institution{Ant Group}
  \city{Hangzhou}
  \country{China}
}
\email{Jamie.cui@outlook.com}

\author{Li Wang}
\affiliation{%
  \institution{Ant Group}
  \city{Hangzhou}
  \country{China}
}
\email{raymond.wangl@antgroup.com}

\author{Lei Wang}
\affiliation{%
  \institution{Ant Group}
  \city{Hangzhou}
  \country{China}
}
\email{shensi.wl@antgroup.com}


\renewcommand{\shortauthors}{Trovato and Tobin, et al.}

\begin{abstract}
K-means is one of the most widely used clustering models in practice.
Due to the problem of data isolation and the requirement for high
model performance, how to jointly build practical and secure K-means for multiple parties has become an important topic for many applications in the industry.
Existing work on this is mainly of two types.
%
The first type has advantages in efficiency, but there is information leakage and potential privacy risks.
The second type is provable secure but is inefficient and even helpless for the large-scale data sparsity scenario. 
In this paper, we propose a new framework for efficient sparsity-aware K-means with three characteristics.
First, our framework is divided into a data-independent offline phase and a much faster online phase, and the offline phase allows to pre-compute almost all cryptographic operations. 
Second, we take advantage of the vectorization techniques in both online and offline phases.
Third, we adopt a sparse matrix multiplication for the data sparsity scenario to improve efficiency further.
%
We conduct comprehensive experiments on three synthetic datasets and deploy our model in a real-world fraud detection task. 
Our experimental results show that, compared with the state-of-the-art solution, our model achieves competitive performance in terms of both running time and communication size, especially on sparse datasets.
\end{abstract}



\ccsdesc[500]{Security and privacy~ Privacy-preserving protocols}
\ccsdesc[300]{Computing methodologies~Machine learning}

\settopmatter{printacmref=false}

\keywords{Privacy-Preserving Machine Learning, Clustering, Secret Sharing, Homomorphic Encryption}


\maketitle
\pagestyle{plain}

\section{Introduction}

    %
    K-means \cite{macqueen1967some} has been widely used in many applications, such as feature extraction \cite{kumari2019leaf}, document clustering \cite{sardar2018analysis}, targeted marketing \cite{kansal2018customer}, and outlier detection \cite{min2018k}.
    Risk control, including fraud detection, is the core for financial companies.  
    %
    %
    Since fraud patterns are diverse and transient and more than 90\% of them are unlabeled, traditional financial companies typically use the K-means algorithm for fraud detection, often in partnership with partner companies.
    For example, they group transactions based on the independent attribute values, such as credit card values from the payment company and buyer values from the merchant, which distinguish outliers when the input size is large enough \cite{chawla2013k}.
    However, over the past few years, the situation of \textit {isolated data islands} \cite{yang2019federated} has become a severe problem.
    Privacy laws and regulations are getting stricter, preventing data sharing between multiple entities. In contrast, a single company with limited data can hardly meet the high-precision requirements of modeling.
    %
 
    \nosection{Application Scenarios}
    Solving the above dilemma in financial application scenarios needs to meet some requirements.
    First, financial scenarios are more strictly regulated than other industries, are more sensitive to information leakage, and prefer jointly trained models that reveal nothing but outputs.
    Secondly, due to the application in production, efficiency is also essential while meeting security.
    Lastly, sparseness is an inherent dataset property in real-world applications, mainly from incomplete user profiles or feature engineering such as one-hot.
    It causes a severe efficiency problem for existing privacy-preserving techniques \cite{chen2020homomorphic}.
    
    %

    \nosection{Existing Work and Unresolved Problems}
    The existing privacy-preserving K-means models mainly fall in two categories:(1) \textit{partial privacy protection} based and (2) \textit{provable security} based.
    The former attempts to use perturbation, permutation, or other cryptographic protocols to protect some sensitive information, such as labels and attributes.
    However, there is still a certain amount of information leakage in the model, such as cluster category \cite{xing2017mutual}, centroids \cite{vaidya2003privacy} and cluster to be merged \cite{jagannathan2010communication}.
    In contrast, the latter uses cryptographic protocols to design algorithms with provable security. 
    However, their running times are much longer than the former due to the high computation and communication complexity of the cryptographic techniques \cite{bunn2007secure,almutairi2017k,jaschke2018unsupervised}. 
    Payman Mohassel et al. \cite{mohassel2020practical} proposed the state-of-the-art privacy-preserving clustering scheme. 
    They proposed an ingenious distance computation protocol of centroids to all samples and built a customized garbled circuit to compute binary secret sharing of the minimum.
    Nevertheless, their scheme still does not work efficiently for three reasons.
    Firstly, they did not take advantage of the pre-computation property of cryptographic operations, which would otherwise result in a very efficient online phase \cite{demmler2015aby}.
    Secondly, they operated on numerical values, which is not as efficient as vectors in secret shared setting \cite{mohassel2017secureml}.
    Finally, they cannot handle the data sparsity scenario, which is common in practice.
    
    \nosection{Our Proposal}
    To solve the above three problems and deploy fraud detection in financial scenarios, we design and implement an efficient privacy-preserving K-means clustering algorithm for the data-sparse scenarios in the semi-honest security setting. 
    %
    %
    %
    %
    Each iteration of the K-means algorithm mainly consists of three steps, i.e., distance computation, cluster assignment, and centroids update. We optimize all of these steps using appropriate cryptographic techniques.
    First of all, inspired by prior work \cite{demmler2015aby}, all three steps can be divided into a data-independent offline phase and a much faster online phase. It allows to pre-compute almost all cryptographic operations offline, making the online phase as competitive as the plaintext training.
    Secondly, we take advantage of the vectorization techniques in both online and offline phases. Especially for distance calculation, most of the computations can be done using matrix instead of numerical operations, which can significantly reduce the number of interactions.
    Finally, under the data-sparse scenario, distance calculation and centroids update steps include many sparse matrix multiplications, which would be inefficient using pure secret sharing technique. A lot of unnecessary calculations come up because secret sharing splits 0 into random shares in finite space. That is, the sparse feature becomes dense after using secret sharing.
    By combining the characteristics of Secret Sharing (SS) \cite{karnin1983secret} and Homomorphic Encryption (HE) \cite{acar2018survey}, we design an efficient solution to improve the efficiency of privacy-preserving K-means clustering.
    The above steps can be further adapted to two types of data distribution \cite{li2020federated}, i.e., \textit{vertically} partitioned data and \textit{horizontally} partitioned data. 
    To this end, we can solve the three problems mentioned above.
    
    
 
    \nosection{Contributions}
    Our contributions can be summarized as follows: (1) We introduce an efficient and secure k-means protocol with the online-offline framework, which takes advantage of the vectorization techniques throughout the algorithm. (2) For the data-sparse scenario, we design a solution to improve the efficiency of the main steps in k-means by combining the characteristics of SS and HE. (3) We conduct the experiments for our proposed protocols and deploy them in a fraud detection task in the financial scenario. (4) The results show that our protocol significantly outperforms the state-of-the-art in the online phase.

\section{Related Work}
    
    \subsection{Partial Privacy Protection Based K-means}
    Algorithms in this category attempt to provide partial privacy protection for certain sensitive information. 
    The methods of Vaidya and Clifton \cite{vaidya2003privacy}, Jha et al. \cite{jha2005privacy}, and Jagannathan and Wright \cite{jagannathan2005privacy} provide security guarantees on attributes and labels of data from different parties. However, none of them can protect the intermediate centroids from being leaked. Centroids in each iteration are very sensitive because they reflect much sample information.
    Unlike these previous methods, the work of Jagannathan et al. \cite{jagannathan2010communication} does not reveal intermediate candidate cluster centers. However, it reveals the cluster category, which exposes which samples are qualitatively similar. Especially in the horizontal scene that they are working for, there is serious information leakage.
    In addition, there are other researches \cite{lin2011privacy,yi2013equally}, in which each iteration of k-means clustering can be performed without revealing most of the intermediate values. Nevertheless, they reveal the number of entities in each cluster, which also contains the general distribution information of private inputs.
    
    \subsection{Provable Security Based K-means}
    In contrast, the second category enjoys provable security using some cryptographic protocols such as oblivious transfer \cite{mohassel2020practical,DBLP:journals/joc/BunnO20}, homomorphic cryptosystems \cite{bunn2007secure,jaschke2018unsupervised} and secret sharing \cite{patel2012efficient,mohassel2020practical}. 
    They do not reveal any information such as the private inputs, cluster category, and numbers of entities in a cluster. However, most of them suffer from high communication and computation costs and cannot scale to large datasets. 
    One of the most notable examples in this category is  \cite{jaschke2018unsupervised}. They use fully homomorphic encryption and fractional encoding to allow addition, multiplication, and division on encrypted data. Unfortunately, it takes almost 1.5 years on a real-world dataset with 400 entries, 2 dimensions, and 3 classes, which is not practical for large data sets.
    %
    %
    Similar to Payman's privacy-preserving clustering scheme \cite{mohassel2020practical}, in this paper, we focus on how to propose new efficient privacy-preserving K-means clustering for multi-parties with full privacy guarantees to address three problems in their work.

\section{Preliminaries}

    
\subsection{Secret Sharing}
\label{Sec:Secret Sharing}
    Secret Sharing (SS) is a technique of secure Multi-Party Computation (MPC), which is widely used to build privacy-preserving machine learning models \cite{demmler2015aby,mohassel2017secureml,knott2021crypten,DBLP:conf/ijcai/0001ZZWLWWLWZ22}. In this paper, we use additive secret sharing \cite{singh2017secure}, which is an efficient scheme.
    
    \nosection{Sharing}
    Under additive secret sharing, a $l$-bit (e.g., $l \in \{1, 64, 128\}$) secret value $s$ will be additively split into $n$ shares and can be reconstructed by the sum of them. We describe the evaluation as follows. (1) \textit{Sharing}
        ${\langle x \rangle} \leftarrow \mathbf{Shr}_i(x)$: $P_i$ holds $x$, splits it into uniformly distributed random shares ${\langle x \rangle}$, and sends ${\langle x \rangle}_j$ to $P_j$, where ${\langle x \rangle}_j \in \mathds{Z}_{2^l} $ and $x = \sum_{j=1}^M {\langle x \rangle}_j $ mod $2^l$.
        When $l$ is not equal to 1, ${\langle x \rangle}$ is called Arithmetic share (A-share).
        Particularly, when $l=1$, $x$ is a binary value and ${\langle x \rangle}^B$ is called Boolean share (B-share).
        In other words, B-share can be seen as additive sharing in the field $\mathds{Z}_{2}$. 
        (2) \textit{Reconstruction}
        $\mathbf{Rec}_i(x)$: All parties send ${\langle x \rangle}$ to $P_i$ who computes $ x = \sum_{i=1}^M {\langle x \rangle}_i $ mod $2^l$.
    
    \nosection{{Operations}}
    Secretly shared values can perform homomorphic operations, e.g., \textbf{SADD} (addition), \textbf{SMUL} (multiplication) \cite{demmler2015aby}, and \textbf{CMP} (comparison) \cite{veugen2015secure}.
    Correspondingly, the SADD \& SMUL operations under B-share are implemented by XOR and AND,a respectively.
    For \textbf{CMP}, although the input is A-share, we convert it to B-share for comparison using \textbf{A2B}, \textbf{B2A}, and \textbf{MSB} (most significant bit). 
    \textbf{MUX} (multiplexers for conditional expressions) is used to pick out a specific value. 
    %
    %
    %
    %
    During a series of operations mentioned above, the intermediate values will be kept secret-shared, and the final result will be reconstructed only once at the end of the protocol. 
    The above secret sharing schema works in a finite field and can be extended to the real number field using fix-point representation \cite{li2018privpy,mohassel2017secureml}. 
    The notions are presented as follows:
    \begin{itemize}
        \item \textbf{SADD}: $\langle z \rangle \leftarrow \langle x \rangle + \langle y \rangle$, where $z = x + y$ mod $2^l$. It can be easily done by adding particitants' local shares and can be extended to linear operation that $\langle z \rangle \leftarrow \alpha  \langle x \rangle + \langle y \rangle + \beta $, where $z = \alpha x + y + \beta$ mod $2^l$.
        \item \textbf{SMUL}:
        $\langle z \rangle \leftarrow \mathbf{Mul}(\langle x \rangle + \langle y \rangle)$, where $z = x \cdot y$ mod $2^l$.
        \item \textbf{A2B}: $\langle x_0 \rangle^B, \cdots, \langle x_l-1 \rangle^B \leftarrow \mathbf{A2B}(\langle x \rangle) $, where $x = \sum_{i=0}^{l-1} x_i \cdot 2^i$.
        \item \textbf{B2A}: $(\langle x \rangle)\leftarrow \mathbf{A2B}(\langle x_0 \rangle^B, \cdots, \langle x_l-1 \rangle^B)$.
        \item \textbf{MSB}:$\langle x_0 \rangle^B \leftarrow \mathbf{MSB}(\langle x \rangle)$.
       \item \textbf{CMP}: $\langle z \rangle \leftarrow \mathbf{CMP}(\langle x \rangle, \langle y \rangle )$.
       \item \textbf{MUX}: $ \langle c \rangle = \mathbf{MUX}(\langle z \rangle, \langle x \rangle, \langle y \rangle) = \langle z \rangle * \langle x \rangle + (1 - \langle z \rangle) * \langle y \rangle )$.
    \end{itemize}

\subsection{Additive Homomorphic Encryption}
    Additive homomorphic encryption, such as Okamoto-Uchiyama (OU) \cite{okamoto1998new} and paillier \cite{paillier1999public}, is a method that supports secure addition when given a ciphertext. It mainly includes the following schema and operations. 
    %
        
    \nosection{Encryption and Decryption}
    One party generates a public-private key pair $(pk, sk)$ and distributes $pk$ to the other party. A plaintext $u$ encrypted by $pk$ is denoted by $\llbracket u \rrbracket$, e.g., $\llbracket u \rrbracket = \mathbf{Enc}(pk; x, r)$, where $r$ is a random number that makes sure the ciphertexts are different in multiple encryptions even when the plaintexts are the same.
    Given a ciphertext $\llbracket v \rrbracket$, it needs to be decrypted with its corresponding private key, i.e., $v = \mathbf{Dec}(sk; \llbracket v \rrbracket)$.
    %
    
    \nosection{Homomorphic Operation}
    We overload `$+$' as a homomorphic addition operation on ciphertext space.
    For any plaintext $u$ and $v$ encrypted by the same $pk$, additive homomorphic encryption satisfies $\llbracket u \rrbracket + \llbracket v \rrbracket = \llbracket u+v \rrbracket$. 
    %
    %
    %
    There are two common variants of it, i.e., $\llbracket u \cdot v \rrbracket = u \cdot \llbracket v \rrbracket$ and $\llbracket u + v \rrbracket = u + \llbracket v \rrbracket$. 
    %
    %
    

    
\subsection{Conversion Between SS and HE}

    
    Conversions between SS and HE are 2-party functionalities.
    Here, we just introduce \textbf{HE2SS} protocol between parties $\mathcal{A}$ and $\mathcal{B}$, which converts data from HE format to SS format \cite{chen2020homomorphic}. 
    That is, $\langle X \rangle \leftarrow $ $\mathbf{HE2SS}(\llbracket X \rrbracket_B, \{ pk_B, sk_B \})$, where $\llbracket X \rrbracket_B$ is a ciphertext encrypted by party $\mathcal{B}$ and held by $\mathcal{A}$. 
    During implementation, $\mathcal{A}$ generates $\langle X \rangle_1$ using Pseudo-Random Generator (PRG) in the same field ($\mathds{Z}_{\phi}$) as HE, and sends $\llbracket \langle X \rangle_2 \rrbracket_b = \llbracket X \rrbracket_b - \langle X \rangle_1$ mod $\phi$ to $\mathcal{B}$. $\mathcal{B}$ decrypts it to get $\langle X \rangle_2$, where $ X = \langle X \rangle_1 + \langle X \rangle_2$ mod $\phi$. The correctness and security is guaranteed by $\llbracket u + v \rrbracket = u + \llbracket v \rrbracket$. 
    It can be applied to the multi-party setting as well.

\subsection{K-means Clustering Algorithm}

     As the most popular algorithm in clustering, K-means was first proposed by MacQueen \cite{macqueen1967some}. It can automatically partition a collection of data sets into separated groups according to the similarity of the objects, where users preset the number of clusters, and each cluster is described by its center. 
    There are many criteria to measure the similarity, e.g., Euclidean distance \cite{faisal2020comparative,na2010research}, manhattan distance \cite{suwanda2020analysis,faisal2020comparative}, and cosine similarity \cite{strehl2000impact,liu2013clustering}. In our case, we assume data objects are elements of $\mathds{R}^d$ and adopt the Euclidean distance as our criterion. 
    The K-means clustering algorithm consists of two phases: cluster centroids initialization and Lloyd's iteration. Its core step is Lloyd's iteration, which mainly consists of three parts, i.e.,\textit{distance compute}, \textit{cluster assignment}, and \textit{centroids update}. 
    %
    A brief description of K-means implementation is presented in Algorithm \ref{alg:K-means}.
    \begin{algorithm}
    \caption{K-means clustering}
    
    \textbf{Input}: Data $X_{(n \times d)}$, Clunster number $k$ $\qquad \qquad \quad \quad \quad$ \\
    \textbf{Output}: Centroids $\mu_{(k \times d)}$
    \label{alg:K-means}
        \begin{algorithmic}[1]
        \STATE {Randomly choose K centroids $\mu_0$} \\
        
            \REPEAT
            
            \STATE{}
            \COMMENT{\textbf{\textbf{Distance Compute}}} \\
            \STATE Compute Euclidean distance $D_{(n \times k )}$, where $D_{ij} = ||X_i - \mu_j||_2$  
            
            \STATE{}
            \COMMENT{\textbf{\textbf{Cluster Assignment}}} \\
            \STATE Reassign the sample to their nearest center by comparison, gets binary matrix $C_{(n \times k)}$, $C_i =\underset{j}{\mathrm{argmin}}D_i$

            \STATE{}
            \COMMENT{\textbf{\textbf{Centroids Update}}} \\
            \STATE Recompute the cluster centers $\mu_j =\frac{\sum_{i=1}^{n} I\{c_i=j\} X_i}{\sum_{i=1}^{n} I\{c_i=j\}}$ 
            
            \UNTIL{Has repeated for a fix number of times or the improvement in one iteration is below a threshold}
            
            \RETURN {Centroids $\mu_{(k \times d)}$}
        \end{algorithmic}
    \end{algorithm}
    

\section{The Proposed Method}
\subsection{Problem Statement and Setting}

    \nosection{Problem Statement}
        Our privacy-preserving K-means follows the framework of the standard K-means algorithm.
        It must achieve comparable performance with the plaintext K-means while protecting data privacy.
        In the following, we consider by default the setting of (semi-honest) 2PC with reprocessing.
        %
        For the convenience of description, in the following, we take two parties as an example, which is easy to apply to the multi-party setting.
        The input is the data owned by two parties and the number of clusters $k$.
        And the output is the share of the final cluster allocation held by each party.
        %


    \nosection{Data Setting}
        Data provided by different parties is partitioned horizontally or vertically in the data space \cite{li2020federated}. 
        %
        We use $X_A / X_B$ to denote the plaintext data of party $A/B$, where each column represents an attribute and each row represents a sample. 
        The feature dimension and sample size of party $A/B$ and joint data are denoted as $d_A/d_B/d$ and $n_A/n_B/n$ respectively. 
        %
        In horizontally partitioned situation, the joint data can be formalized as $ \mathcal{X} = \left [\begin{smallmatrix} \boldsymbol{X_A}^T & \boldsymbol{X_B}^T \end{smallmatrix}\right]^T $ and $n=n_A+n_B,d=d_A=d_B$. In vertically partitioned situation, the joint data can be formalized as $\mathcal{X} = \left[\begin{smallmatrix} \boldsymbol{X_A}, & \boldsymbol{X_B} \end{smallmatrix}\right] $, and $n=n_A=n_B,d=d_A+d_B$.
    
    \nosection{Security Setting} 
        We consider the standard semi-honest model, widely used in many work \cite{brickell2005privacy,li2018privpy,araki2016high,zhang2013cryptanalysis,DBLP:conf/ijcai/0001ZZWLWWLWZ22}, where a probabilistic polynomial-time adversary with honest-but-curious behaviors is considered. 
        Assume that the adversary engages in protocol strictly while trying to preserve all intermediate outcomes and infer as much as possible.
        We also assume that parties do not collude with each other.
           
           

    \nosection{Online-Offline Setting}
        Similar to \cite{demmler2015aby,mohassel2017secureml,damgaard2013practical}, we split our protocols into a data-independent offline phase and a much faster online phase. The offline phase consists mainly of cryptographic operations, 
        which can be performed without the presence of data.
	    Meanwhile, the online stage contains the data-dependent steps in the K-means algorithm.
        Take two-party secret sharing matrix multiplication of as an example \cite{mohassel2017secureml,beaver1991efficient}. To multiply two secretly shared matrices $\langle A \rangle$ and $\langle B \rangle$, firstly, we prepare a shared matrix triple (Beaver’s triplet) $\langle U \rangle$ $\langle V \rangle$ and $\langle Z \rangle$, where each element in $U$ and $V$ is uniformly random in $\mathds{Z}_{2^l}$ and $Z = U V$ mod $2^l$.
        Secondly, given two shared matrices $\langle A \rangle$ and $\langle B \rangle$, two parties can compute $\langle E \rangle = \langle A \rangle - \langle U \rangle$ and $ \langle F \rangle = \langle B \rangle - \langle V \rangle$ locally. After one round interaction, both parties can reconstruct $E$ and $F$ and get the final multiplication result $\langle C \rangle = -iEF + \langle A \rangle_i F + E \langle B \rangle_i + \langle Z \rangle_i$, where $i \in \{0, 1\}$.
        The triplets generation step (the first step) is time-consuming because it needs a large number of cryptographic operations, such as oblivious transfer or homomorphic encryption \cite{demmler2015aby,mohassel2017secureml}. 
        Fortunately, this step is data-independent and can be prepared in advance as an offline phase, using either cryptography-based methods or a trusted third party. 
        After it, the online phase (the second step), which depends on the input data, could be done efficiently.

    
\subsection{Privacy-Preserving K-means}
    Recall that the K-means clustering can be divided into \textit{cluster centroids initialization} and \textit{Lloyd's iteration}. In this subsection, we will first introduce some initialization methods, then describe the vectored secure Lloyd's iteration, which helps improve the efficiency of the privacy-preserving K-means algorithm.
    
    \nosection{Initialization}
        Cluster centroids initialization is an important issue because it determines the convergence rate.
        It can be done using different strategies. 
        A simple and common strategy is random initialization. All parties can pick random values or jointly negotiate random indexes of all K groups.
        There is another way to start with better initials. Each party locally runs the plain-text K-means clustering on its own data first. 
        %
        %
        In addition, before performing clustering, a joint normalization operation is required.

    \nosection{Privacy-Preserving Vectorized Lloyd's Iteration}
    \label{sec:lloid}
        After initialization, K-means will iteratively update the centroid until convergence using Lloyd's iteration. 
        During each iteration, there are three steps 
        i.e., \textit{Secure Distance Computation}, \textit{Secure Cluster Assignment}, and \textit{Secure Centroid Update}. 
        In these steps, we focus on two issues: security and efficiency. 
        For security, once the input data is shared through SS, during the whole process, each calculated intermediate value (e.g., Euclidean squared distance) is secretly shared as two uniformly distributed values held by each party.
        The final result will be reconstructed only once at the end of the protocol.
        Meanwhile, we solve the efficiency problem in two ways.
        Firstly, we divide the secure computation steps into a \textit{data-independent offline phase} and a \textit{data-dependent online phase}. 
        %
        Secondly, we use vectorization to speed up the protocols. Compared with privacy-preserving numerical operations, the matrix form significantly reduces computation and communication in both online and offline phases. 
        %
        %
        Below we will present the details of vectorized secure Lloyd's iteration.
        
        \nosection{Secure Distance Computation $\mathcal{F}_{ESD}$}
            Secure distance computation aims to measure the similarity between all samples and the centroids using Euclidean Square Distance (ESD). 
            In order to simplify the calculation, we use Euclidean Squared Distance (\textbf{ESD}) as the substitute for the Euclidean Distance in the SS state.
            %
            %
            Formally, take a sample $\mathcal{X}_i$ and a centroid $\mu_j^{(t)}$ of $j$-cluster at the round $t$ as an example. 
            Both parties can calculate their A-share of ESD $\langle D_i^{(t)} \rangle$ with $\mathcal{X}_i$ and A-share $\langle \mu_j^{(t)} \rangle$ using $\mathcal{F}_{ESD}$:
            %
                \begin{equation}
                \begin{aligned}
                    D^{(t)}_i &= \mathcal{F}_{ESD}(\mathcal{X}_i, \langle \mu_j^{(t)} \rangle) 
                    = ||\mathcal{X}_i - \langle \mu_j^{(t)} \rangle||_2 \\
                    &= \sum_{l=1}^{d} \mathcal{X}_{il}^2 + \sum_{l=1}^d \langle \mu_{jl}^{(t)} \rangle^2 - 2\sum_{l=1}^d \mathcal{X}_{il} \langle \mu_{jl}^{(t)} \rangle.
                \label{d}
                \end{aligned}
                \end{equation}
            It is well-known that K-means only needs to compare the distance between all the samples and different cluster centers. 
            Since $\sum_{l=1}^{d} \mathcal{X}_{il}^2$ remains unchanged with a fixed $i$, it can be omitted when calculating Equation (\ref{d}). 
            That is, only $\langle {D'}_i^{(t)} \rangle$ needs to be calculated: 
            \begin{equation}
            \langle {D'}_i^{(t)} \rangle = 
                    \mathcal{F}_{ESD}'( \mathcal{X}_i, \langle \mu_j^{(t)} \rangle) 
                    = \sum_{l=1}^d \langle \mu_{jl}^{(t)} \rangle^2 -2\sum_{l=1}^d \mathcal{X}_{il} \langle \mu_{jl}^{(t)} \rangle.
            \end{equation}
            Vectorizing it with matrix form to get:
                \begin{equation}
                \label{Eq:Vectorized ESD}
                \begin{aligned}
                    \langle D'^{(t)} \rangle = 
                    \mathcal{F}_{ESD}' (\mathcal{X}, \langle \mu^{(t)} \rangle)
                    = \langle U \rangle - 2 \mathcal{X} \langle \mu^{(t)} \rangle^T,
                \end{aligned}
                \end{equation}
            where $\langle U \rangle = \mathbf{1}_{n\times 1} [ |\langle \mu_1^{(t)} \rangle|^2, |\langle \mu_2^{(t)} \rangle|^2, \cdots, |\langle \mu_k^{(t)} \rangle|^2]_{1\times k}$ with the dimension of $(n \times k)$. 
            %
            %
            The traditional method needs to calculate the distance from each sample to each center point one by one.
            The total number of interactions in each iteration is $nk$.
            After vectorizing in Equation (\ref{Eq:Vectorized ESD}), the first item $\langle U \rangle$ only needs to be calculated once before the iteration starts, and the second item $\mathcal{X} \langle \mu^{(t)} \rangle^T$ only needs to be interacted with once per iteration due to the matrix operation.
            Thus, it is constructive for improving efficiency in high-latency scenarios.

            Below we will separately describe the details under two data distribution scenarios.
            Under vertically partitioned scenario, $\mathcal{X}_{(n \times d)} = [X_{A},X_{B}]$
            and we can get:
            %
            \begin{small}
            \begin{equation}
                \begin{aligned}
            		\langle {D'}^{(t)} \rangle 
            		&=\mathcal{F}_{ESD}'(\mathcal{X}, \langle \mu^{(t)} \rangle) 
            		= \langle U \rangle -2 \mathcal{X} \langle \mu^{(t)} \rangle^T \\
            		&= \langle U \rangle - 2 [X_A(\langle \mu^{(t)} \rangle_{A1} + \langle \mu^{(t)} \rangle_{B1}) + X_B (\langle \mu^{(t)} \rangle_{A2} + \langle \mu^{(t)} \rangle_{B2})],
                \end{aligned}
            \end{equation}
            \end{small}
    		where $\langle  \mu^{(t)} \rangle_{A1}$ with dimension $(k \times d_A)$ and $ \langle \mu^{(t)} \rangle_{A2}$  with dimension $(k \times d_B)$ are held by party A ($[\langle  \mu^{(t)} \rangle_{A1}, \langle  \mu^{(t)} \rangle_{A2}] = \langle  \mu^{(t)} \rangle_{A}$).
    		Similarly, $\langle  \mu^{(t)} \rangle_{B1}$ with dimension $(k \times d_A)$ and $ \langle \mu^{(t)} \rangle_{B2}$  with dimension $(k \times d_B)$ are held by party B ($[\langle  \mu^{(t)} \rangle_{B1}, \langle  \mu^{(t)} \rangle_{B2}] = \langle  \mu^{(t)} \rangle_{B}$).
            %
    		%
    		Note that $X_A \langle \mu^{(t)} \rangle_{A1}^T$ and $X_B \langle \mu^{(t)} \rangle_{B2}^T$ can be computed locally by A and B respectively.
            While, $X_B \langle \mu^{(t)} \rangle_{A2}^T$ and $X_A \langle \mu^{(t)} \rangle_{B1}^T$ should be jointly computed using vectorized secret sharing multiplication.
            Under horizontally partitioned data setting, $\mathcal{X}_{(n \times d)} = \left[ \begin{array}{c} X_{A(n_A \times d)} \\ X_{B(n_B \times d)} \end{array} \right]$ and we can get:
            \begin{equation}
                \begin{aligned}
                    \langle {D'}^{(t)} \rangle
                    & = \mathcal{F}_{ESD}'(\mathcal{X}, \langle \mu^{(t)} \rangle) = \langle U \rangle - 2 \mathcal{X} \langle \mu^{(t)} \rangle^T \\
                    & = \langle U \rangle - 2 \left[\begin{array}{c} X_A \\ X_B \end{array}  \right](\langle \mu^{(t)} \rangle _A^T + \langle \mu^{(t)} \rangle _B^T) \\
                    &= \langle U \rangle -  2 \left[ \begin{array}{c} X_A(\langle \mu^{(t)} \rangle _A^T + \langle \mu^{(t)} \rangle _B^T) \\ X_B(\langle \mu^{(t)} \rangle _A^T + \langle \mu^{(t)} \rangle _B^T) \end{array} \right], 
                \end{aligned}
            \end{equation}
            where $X_A\langle \mu^{(t)} \rangle _B^T$ and $X_B\langle \mu^{(t)} \rangle _A^T$ should be jointly computed using vectorized secret sharing multiplication.
            
        \nosection{Secure Cluster Assignment $\mathcal{F}^k_{min}$}
            This step aims to find the clusters to which each sample belongs. 
            For distances from one sample to all current centroids, we perform secure comparisons to find the minimum among them and securely mark its position as the clustered index. 
            The output of $\mathcal{F}^k_{min}$ is an A-share of the clustered index to which this sample belongs. 
            Formally, two parties run $ \langle C_i^{(t)} \rangle \leftarrow \mathcal{F}^k_{min}(\langle D_i^{(t)} \rangle) ( i \in [0,n])$ to reassign sample $i$ to its nearest center $C_i^{(t)} :=(0, \cdots , 0,1,0, \cdots, 0)$, where $1$ appears at the $j$-th element if cluster center $j$ is the closest to sample $i$.
            To accelerate function $\mathcal{F}^k_{min}$, we utilize the classical binary tree reduction method, which consists of $k-1$ CoMParison Module (CMPM). 
            Take the cluster assignment in Figure \ref{cmp_fig} as an example, where we assume the number of clusters $k$ is 6.
            All the nodes (shown in yellow rectangles) in this inversed tree structure consist of two parts. 
            The left part of a node (shown in red font) is the record of the smallest distance among the leaf nodes of its subtree. 
            In Figure \ref{cmp_fig}, $\langle 1 \rangle$ in the right node of layer 2 is the smallest distance in $\langle 7 \rangle, \langle 2 \rangle, \langle 1 \rangle, \langle 3 \rangle$.
            The right part of a node  (shown in black font) is the relative position of the smallest distance. In particular, the relative positions of all the leaf nodes in layer 4 are initialized to $\langle 1 \rangle$.
            The key to $\mathcal{F}^k_{min}$ is CMPM, and we describe the main idea of which in a dotted box in Figure \ref{cmp_fig}.
            The input of CMPM are two distances to be compared (e.g., $ D_{[0]}^{(t)}=\langle 2 \rangle, D_{[1]}^{(t)}=\langle 1 \rangle$) and their relative positions (e.g., $c_{[0]}^{(t)}=\langle 01 \rangle, c_{[1]}^{(t)}=\langle 10 \rangle $). 
            The CMPM mainly contains two primitive operations: \textbf{CMP} and \textbf{MUX}, as described in Section \ref{Sec:Secret Sharing}.
            First, we use \textbf{CMP} to compare ``less than'', which judges the shared most signification bit of the subtraction using the \textbf{A2B}, \textbf{B2A}, and \textbf{MSB}. 
            Take the details of a CMPM in Figure \ref{cmp_fig} for example. 
            When we need to compare distances $\langle 2 \rangle$ and $\langle 1 \rangle$, we extract and judge the shared sign bit of $\langle 2 - 1 \rangle$ to get the result $z = \langle 0 \rangle$ . 
            Second, we use \textbf{MUX} to pick out the smaller one from the distances to be compared. In Figure \ref{cmp_fig}, we get the smaller distance ($\langle 1 \rangle$) from two distances of child nodes ($\langle 2 \rangle, \langle 1 \rangle$). 
            Then the relative position of the smaller distance can be securely marked by concatenation of $\langle b \rangle \langle c_{[0]}^{(t)} \rangle$ and $ (\langle 1 \rangle - \langle b \rangle) \langle c_{[1]}^{(t)} \rangle$, e.g., $ \langle 0010 \rangle \leftarrow \mathcal{F}^4_{min}(\langle 7 \rangle, \langle 2 \rangle, \langle 1 \rangle, \langle 3 \rangle)$ in Figure \ref{cmp_fig}.
            %
            %
            Finally, after the reduction of the whole tree, we can get secretly shared index matrix $ \langle C_i^{(t)} \rangle$ of sample $i$ as the output of  $\mathcal{F}^k_{min}(\langle D_i^{(t)} \rangle) ( i \in [0,n])$ .
            %
            %
            %

            \begin{figure}
            	\center{\includegraphics[width=\columnwidth]  {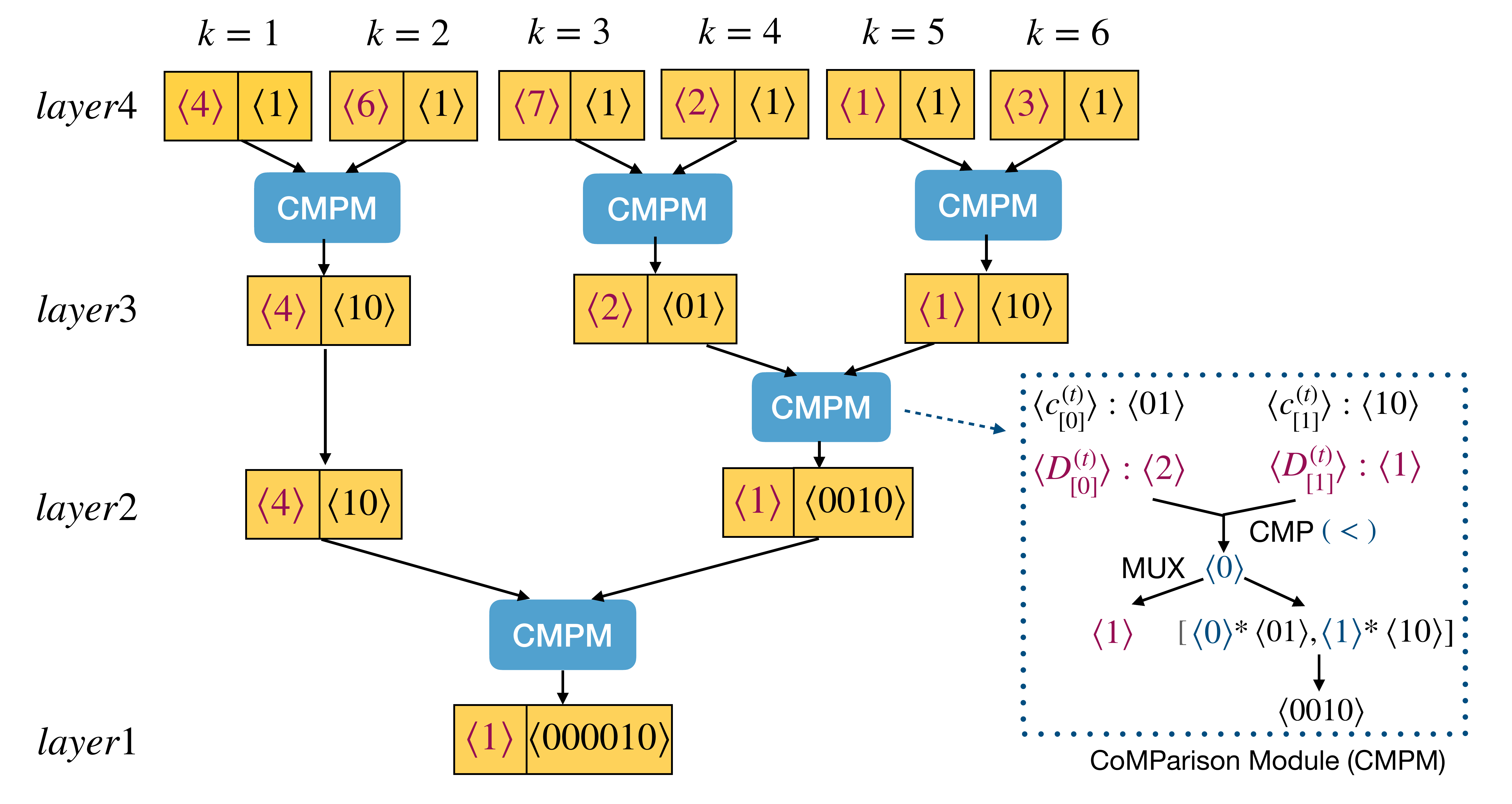}} 
            	\caption{An example to illustrate the implementation of the secure cluster assignment protocol $\mathcal{F}^k_{min}$, where we assume the number of clusters $k$ is 6.}
            	\label{cmp_fig}
            	\vskip -0.15in
            \end{figure}

        \nosection{Secure Centroids Update $\mathcal{F}_{SCU}$}
            After reassigning the sample to their nearest center, the cluster centers $\mu^{(t+1)}$ should be recomputed by $\langle C^{(t)} \rangle$ and $\mathcal{X}_i$ below:
                \begin{equation}
                \label{Eq:SCU}
                \begin{aligned}
                    \langle \mu^{(t+1)} \rangle = \mathcal{F}_{SCU}(\langle C^{(t)} \rangle, \mathcal{X}) 
                    =\frac{\sum_{i=1}^{n} \langle I^{(t)}\{c_i=j\} \rangle \mathcal{X}_i}{\sum_{i=1}^{n} \langle I^{(t)}\{c_i=j\} \rangle} 
                    = \frac{\langle C^{(t)} \rangle ^T \mathcal{X}}{\mathbf{1}_{1\times n} \langle C^{(t)} \rangle},
                \end{aligned}
                \end{equation}
            where $1 \leq j \leq k, 1 \leq i \leq n$, and $I\{c_i=j\}$ is a indicator function that equals 1 if $c_i = j$ and 0 otherwise.
            To compute the secret sharing of the updated cluster $\langle \mu^{(t+1)} \rangle$, parties separately compute the numerator and denominator.
            For vertically partitioned scenario, $\langle C_j \rangle ^T \mathcal{X} = (\langle C\rangle _A + \langle C\rangle _B)_j^T [X_{A{(n \times d_A)}},X_{B(n \times d_B)}]$. And for horizontally partitioned data setting, $\langle C_j \rangle ^T \mathcal{X} = (\langle C\rangle _A + \langle C\rangle _B)_j^T \left[ \begin{array}{c} X_{A(n_1 \times d)} \\ X_{B(n_2 \times d)} \end{array} \right]$. 
            Their processing is the same as the distance computation.
            If the inputs are local, the multiplication can also be calculated locally.
            Other multiplications are computed jointly.
            Then the reminders are calculated using the broadcasting secure division operation, which is converted to secure multiplication and addition.
            
        \nosection{Checking the Stopping Criterion  $\mathcal{F}_{CSC}$}
            The iteration terminates when it converges or reaches a certain iteration. For convergence, parties can jointly invoke the secure comparison protocol $\mathcal{F}_{CSC}(\langle \mu^{t} \rangle, \langle \mu^{t+1} \rangle, \epsilon) \leftarrow \mathbf{CMP} (\mathcal{F}_{ESD}(\langle \mu^{t} \rangle, \langle \mu^{t+1} \rangle), \epsilon)$ to check the stop threshold $\epsilon$.
        
          
                
                
        
                
                

\subsection{Optimization for Sparse Scenario} 
    
    \nosection{Motivation}
        \textit{Feature sparsity} is usually caused by missing feature values or feature engineering such as one-hot.
        Such data has two characteristics. 
        First, it will become dense after using secret sharing.
        %
        For example, there is a 4-dimensional sparse vector $(0,0,1,0)$. After secret sharing, it will be randomly split into two shares uniformly distributed over a finite field, e.g., $(13,1,7,8)$ and $(3, 15, 10, 8)$ in $\mathds{Z}_{2^4}$.
        This will cause much unnecessary computation since we cannot determine the position of 0 in the state of sharing. Especially in the high-dimensional sparse feature situation, communication becomes the bottleneck of the SS-based model. It can even make the model unusable in practice.
        Second, it is usually not all 0s for the entire row or column.
        Therefore, our algorithm should be optimized for the scenario where any element in the data may be 0.

\nosection{Privacy-Preserving Sparse K-means}
        After vectorization, we can see that the cluster update involves many matrix multiplications. 
        There is a secure sparse matrix multiplication protocol combining the advantages of HE and SS \cite{chen2020homomorphic}. 
        Given a sparse matrix $X$ held by $\mathcal{A}$ and a dense matrix $Y$ held by $\mathcal{B}$, 
        the protocol can effectively calculate $X Y$ without revealing the value of $X$ and $Y$, which multiplies sparse data under HE space and uses HE2SS to convert the output to the secret-shared state for facilitating the following calculations under SS space.
        %
        %
        %
        %
        %
        As shown in Protocol \ref{pro:SparseMul}, 
        %
        Line 2 is the ciphertext multiplication using additive HE.
        It can significantly speed up sparse matrix multiplication by eliminating calculations involving zeros because the sparse matrix is $X$ held by $\mathcal{A}$.
        Lines 3-5 show how to generate secret shares under homomorphically encrypted field.
        Compared with SS, this protocol does not need to transmit X-sized matrices.
        Therefore, the communication cost of it will be much cheaper when the shape of Y is much smaller than X.
        %
        %
        \begin{algorithm}[t]
        \floatname{algorithm}{Protocol} 
        \caption{Secure Sparse Matrix Multiplication}
        \label{pro:SparseMul}
        \textbf{Input}: A sparse matrix X hold by A, a matrix Y hold by B, HE key pair of B ($\{pk_b, sk_b$\})\\
        \textbf{Output}: $\langle Z \rangle_1$ for A and $\langle Z \rangle_2$ for B, where $Z = XY$
        
            \begin{algorithmic}[1]
            \STATE B encrypts Y with $pk_b$ and sends $\llbracket Y \rrbracket_b$ to A
            \STATE A calculates $\llbracket Z \rrbracket_b = X\llbracket Y \rrbracket_b$
            \STATE A locally generates share $\langle Z \rangle_1$ from $\mathds{Z}_{2^l}$
            \STATE A calculates $\llbracket \langle Z \rangle_2 \rrbracket_b = \llbracket Z \rrbracket_b - \langle Z \rangle$ mod $\mathds{Z}_{2^l}$ and sends it to B 
            \STATE B decrypts $\llbracket \langle Z \rangle_2 \rrbracket_b$ to get $\langle Z \rangle_2$
            \RETURN {$\langle Z \rangle_1$ for A and $\langle Z \rangle_2$ for B, where $Z = XY$}
            \end{algorithmic}
        \end{algorithm}
        %
        We show the overall framework of our privacy-preserving K-means algorithm with sparse optimization in Algorithm \ref{alg:ver_sparse},
        where we take the vertical distribution of data as an example. 
        For sparse optimization, we replace the secure multiplication in lines 6 and 12 with the secure sparse multiplication mentioned above. 
        Taking the  $ X_A \langle \mu^{(t)} \rangle_{B1(k \times d_A)}^T $ in line 6 as an example,
        party B encrypts the $\langle \mu^{(t)} \rangle_{B1(k \times d_A)}^T $ and sent it to party A.
        Party A picks out the non-zero elements of $X_A$ and calculates them with $ \llbracket \langle \mu^{(t)} \rangle_{B1(k \times d_A)}^T \rrbracket $ based on HE operation.
        Note that the shape of $X_A$ is related to the sample size and it is much larger than shape of $ \llbracket \langle \mu^{(t)} \rangle_{B1(k \times d_A)}^T \rrbracket$.
        Therefore our framework can handle the high-dimensional sparse feature situation and have good communication efficiency. 
        Although it requires more computation, it is suitable for most practical applications, such as large companies are often computationally resource-rich but bandwidth-constrained.
        \begin{algorithm}[tb]
            \caption{Privacy-preserving  (sparse)  K-means for vertically partitioned data}
            \textbf{Input}: sparse matrix $X_A$, $X_B$ hold by A, B respectively (the joint data can be formalized as $\mathcal{X}= [X_A,X_B]$ and $n=n_A+n_B,d=d_A=d_B$); cluster number $k$ \\
            \textbf{Output}: Centroids $\langle \mu^{(t)} \rangle$
            \label{alg:ver_sparse}
            \begin{algorithmic}[1]
            \STATE {Randomly choose K centroids $\mu^{(0)}$ or each of them locally run the K-means for initialization.} \\
            \REPEAT
                
                
                \STATE{}
                \COMMENT{\textbf{\textbf{Distance Compute}}} \\
                \STATE {A locally calculates $X_A \langle \mu^{(t)} \rangle_{A1(k \times d_A)}^T$, $\langle U \rangle_A$.}
                \STATE {B locally calculates $X_B \langle \mu^{(t)} \rangle_{B2(k \times d_B)}^T$, $\langle U \rangle_B$.}
                \STATE {A and B securely calculate $X_B \langle \mu^{(t)} \rangle_{A2(k \times d_B)}^T$ and $ X_A \langle \mu^{(t)} \rangle_{B1(k \times d_A)}^T $ using secure sparse matrix multiplication to get $\langle D' \rangle $.}
                
                \STATE {}
                \COMMENT{\textbf{\textbf{Cluster Assignment}}}
                \STATE {Reassign the sample to their nearest center by binary trees compare recursively, gets binary matrix $\langle C^{(t)} \rangle $, where $\langle C_i^{(t)} \rangle \leftarrow \mathcal{F}^k_{min}(\langle D_i^{(t)} \rangle) ( i \in [0,n])$.}
                
                \STATE {}
                \COMMENT{\textbf{\textbf{Centroids Update}}}
                \STATE {A locally calculates $\langle C^{(t)} \rangle^TX_A$.}
                \STATE {B locally calculates $\langle C^{(t)} \rangle^TX_B$.}
                \STATE {A and B securely recompute the cluster centers using $\mu^{(t+1)} =\frac{\langle C^{(t)} \rangle^T \mathcal{X}}{\mathbf{1}_{1\times n}\langle C^{(t)} \rangle}$ by secure sparse matrix multiplication and secret sharing division which is converted to SADD \& SMUL operations.}
                \UNTIL{Repeated for a fix number of times or the improvement in one iteration is below a threshold.} 
            \RETURN {Centroids $\langle \mu^{fina} \rangle$}
            \end{algorithmic}
        \end{algorithm}
        \vskip -1in
\subsection{Security Analysis}
    %
    %
    We present the security proof of our privacy-preserving K-means clustering algorithm in this subsection.
    First, we introduce the security definition of our algorithm against semi-honest adversaries.
    We adopt the security definition from \cite{goldreich2009foundations}, where the security is defined over statistical security parameter $\kappa$ and computational security parameter $\lambda$.
    We also use $\cindist$ to denote computationally indistinguishability by the parameter $\lambda$.
    \begin{definition}
     We say a protocol $\pi$ is a secure instantiation of $f=(f_0, f_1)$ against semi-honest adversaries, if for all sufficiently large $\lambda\in\NN^*$, $x, y\in\bin^*$, there exists two probablistic polynomial-time simulators $(\simulator_0, \simulator_1)$, such that the following holds,
    \begin{multline}
        \{(\simulator_0(1^\lambda, x, f_0(x, y)),  f(x,y))\}_{x,y,\lambda} \\  \cindist
        \{(\mathsf{view}^\pi_0(\lambda, x, y), \mathsf{output}^\pi(\lambda, x, y))\}_{x, y,\lambda} 
    \end{multline}
    \begin{multline}
        \{(\simulator_1(1^\lambda, y, f_1(x, y)),  f(x,y))\}_{x,y,\lambda} \\  \cindist
        \{(\mathsf{view}^\pi_1(\lambda, x, y), \mathsf{output}^\pi(\lambda, x, y))\}_{x, y,\lambda}.
    \end{multline}
    \end{definition}
    
    \begin{lemma}
    Our privacy-preserving (sparse-aware) k-means protocol is a secure instantiation of k-means algorithm against semi-honest adversaries with the existence of ideal MPC functionalities $\mathcal{F}_\mathsf{ESD}$, $\mathcal{F}_\mathsf{min}^k$, $\mathcal{F}_\mathsf{SCU}$, and $\mathcal{F}_\mathsf{CSC}$.
    \end{lemma}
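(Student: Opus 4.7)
The plan is to invoke the standard sequential/modular composition theorem for semi-honest MPC, so that the entire protocol inherits security from its building blocks. Concretely, for each party $P_i$ I would build a probabilistic polynomial-time simulator $\simulator_i$ that, given only $P_i$'s input and $P_i$'s share of the final centroids $\langle \mu^{\text{final}} \rangle_i$, produces a transcript computationally indistinguishable from the view $\mathsf{view}^\pi_i$ in the real execution. Because Algorithm \ref{alg:ver_sparse} interacts with the outside world only through $\mathcal{F}_\mathsf{ESD}$, $\mathcal{F}_\mathsf{min}^k$, $\mathcal{F}_\mathsf{SCU}$, $\mathcal{F}_\mathsf{CSC}$, and (in the sparse branch) Protocol \ref{pro:SparseMul}, all other operations are local on shares and contribute nothing to the view.

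Next I would unroll the algorithm into a sequence of hybrids, replacing each ideal-functionality invocation with its simulated transcript one at a time. Inside every hybrid, the messages $P_i$ observes fall into three classes: its own input, secret shares returned by an ideal functionality, and ciphertexts produced by the additively homomorphic scheme in the sparse branch. For the shares, additive secret sharing guarantees that each individual share is uniformly distributed in $\mathds{Z}_{2^l}$ independently of the underlying value, so $\simulator_i$ samples fresh uniform values; their joint distribution with the honest party's shares is pinned down only by the final output, which the simulator possesses and uses to adjust the last share in the chain. For the ciphertexts, $\simulator_i$ encrypts arbitrary dummy plaintexts under $pk_B$, and indistinguishability follows from the IND-CPA security of the HE scheme used in Protocol \ref{pro:SparseMul} and the \textbf{HE2SS} conversion. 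Summed across all hybrid transitions, the distinguishing advantage is bounded by $\mathsf{negl}(\lambda)$ from IND-CPA plus a $2^{-\kappa}$ statistical gap from sampling in $\mathds{Z}_\phi$ versus $\mathds{Z}_{2^l}$, which is exactly the slack required by $\cindist$.

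The main obstacle I anticipate is bookkeeping in the sparse branch. Protocol \ref{pro:SparseMul} is itself a two-party protocol in the HE-hybrid model, so I must invoke the simulator of \cite{chen2020homomorphic} as a subroutine and verify that nesting it inside the top-level simulation does not leak the sparsity pattern of $X_A$: in particular, the fact that party $A$ evaluates the homomorphic product only over the nonzero entries of $X_A$ must not manifest in message length or ciphertext count, so the simulated transcript needs to pad to the worst-case dimension. I also have to align the fixed-point encoding with the modulus switch between $\mathds{Z}_\phi$ and $\mathds{Z}_{2^l}$ so that the statistical masking argument in \textbf{HE2SS} goes through with parameter $\kappa$. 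Once these details are pinned down, modular composition of the per-call simulators yields the stated lemma.
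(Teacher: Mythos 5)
Your overall strategy---work in the hybrid model where $\mathcal{F}_\mathsf{ESD}$, $\mathcal{F}_\mathsf{min}^k$, $\mathcal{F}_\mathsf{SCU}$, $\mathcal{F}_\mathsf{CSC}$ are ideal, argue that every functionality output handed to the corrupted party is a fresh additive share and hence simulatable by uniform randomness, then compose---is the same as the paper's. But your taxonomy of the corrupted party's view (its own input, shares returned by ideal functionalities, and HE ciphertexts) omits the one message that is none of these: the reconstructed output of $\mathcal{F}_\mathsf{CSC}$, i.e., the plaintext continue/stop bit that both parties must learn at the end of every iteration in order to decide whether to loop again. That bit is not a uniformly random share and cannot be sampled fresh; worse, your simulator is given only the party's input and its share of the final centroids, from which the number of iterations is not computable, so it has no way to decide in which hybrid to emit ``stop''. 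The paper's simulator resolves exactly this by extracting the iteration count $K$ from the (leaked) function output and emitting $0$ for $i<K$ and $1$ at $i=K$. You need the same additional input to your simulator---equivalently, you must declare the iteration count to be part of the ideal output---or the simulation of the loop structure fails.

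A second, less serious point: the lemma is stated conditionally on the ideal functionalities, so the bulk of your effort---IND-CPA of the HE scheme, the HE2SS modulus switch from $\mathds{Z}_{\phi}$ to $\mathds{Z}_{2^l}$, and padding the sparse homomorphic products so that ciphertext and operation counts do not reveal the sparsity pattern of $X_A$---belongs to the separate claim that the concrete instantiations securely realize $\mathcal{F}_\mathsf{ESD}$ and $\mathcal{F}_\mathsf{SCU}$, which the paper disposes of in a single sentence after its proof. Your observations there are sound and in fact more careful than the paper's (the potential leakage of the sparsity pattern through the nonzero-only evaluation is a real concern the paper never discusses), but they are not needed to establish the lemma as stated, whereas the stopping-criterion message, which is needed, is the piece you dropped.
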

    
    \begin{proof}
        %
        Since the executions of both parties are identical, we only show the proof that $P_0$ is corrupted.
        With the existence of the above ideal functionalities, the message received by $P_0$ is:
        
        \medskip
        \noindent
        For each iteration $i$:
        \begin{enumerate}
            \item The output of $\mathcal{F}_\mathsf{ESD}$, denoted as $\set{f_\mathsf{esd}}^{(i)}_0$.
            \item The output of $\mathcal{F}_\mathsf{min}^k$, denoted as $\set{f_\mathsf{min}^k}^{(i)}_0$.
            \item The output of $\mathcal{F}_\mathsf{SCU}$, denoted as $\set{f_\mathsf{scu}}^{(i)}_0$.
            \item The stopping check result $\bin$ where we use $0$ to denote ``continues'', and $1$ to denote ``stop''.
        \end{enumerate}
        The simulation process goes as follows, simulator $\simulator_0$ extracts from the function output: 1) the iteration number $K$, 2) the final results. 
        By definition, the messages 1-3 defined above are indistinguishable from independent randomness. Therefore the simulation of those processes is trivial.
        We let $\simulator_0$ sample independent randomness for steps 1-3, and then for step 4, output $0$ if $i<K$, and $1$ otherwise.
        Clearly, in step 4, the simulator $\simulator_0$ and the real-world protocol at $P_0$ output identical messages.

        Now that if we consider the random initialization points (for all clusters) are public, the training procedure is a deterministic function.
        It allows us to use a simpler definition for semi-honest security:
        \begin{align*}
            \{(\simulator_0(1^\lambda, x, f_0(x, y))\}_{x,y,\lambda}  \cindist
            \{(\mathsf{view}^\pi_0(\lambda, x, y))\}_{x, y,\lambda}
            \\
            \{(\simulator_1(1^\lambda, y, f_1(x, y))\}_{x,y,\lambda}  \cindist
            \{(\mathsf{view}^\pi_1(\lambda, x, y))\}_{x, y,\lambda}.
        \end{align*}

        Therefore, we say the view between the real-world and the ideal world are indistinguishable (steps 1-4).
        The simulation completes.
    \end{proof}
    \noindent
    Also, given that we instantiate all of our functionalities using arithmetic MPC, all constructions of our functionalities $\mathcal{F}_\mathsf{ESD}$, $\mathcal{F}_\mathsf{min}^k$, $\mathcal{F}_\mathsf{SCU}$ satisfy the definition of MPC gates with semi-honest security.


\section{Experiments and Applications}
    Our experiments intend to answer the following questions:
    \textbf{Q1:} how does our model perform compared with the state-of-the-art privacy-preserving clustering protocol \cite{mohassel2020practical}?
    \textbf{Q2:} How does the division of online and offline affect the performance?
    \textbf{Q3:} How does vectorization affect the performance? 
    \textbf{Q4:} What is the acceleration of our optimized model in sparse scenarios?
    \textbf{Q5:} How about the effectiveness of our framework for real-world multi-party fraud detection applications? 

    \subsection{Experimental Setup}
            
        \nosection{Implementation and Hardware Details}
            We use the C++ programming language to implement our algorithm and run it on a server equipped with a 2.5 GHz Intel Core E5-2682 processor and 188 GB RAM. 
            We consider two network settings. 
            For Q1, we use a Local Area Network (LAN) setting with 0.02ms round-trip latency and 10 Gbps network bandwidth, the same as the comparison model \cite{mohassel2020practical}. 
            For Q2 to Q4, we use the Wide Area Network (WAN) with a 20Mbps maximum throughput and 40ms round-trip latency.
            %
            %
            
        \nosection{Hyper-Parameters}
            For \textit{HE method} in our experiments, We choose Okamoto-Uchiyama encryption (OU) \cite{okamoto1998new} which outperforms Pailler over all operations \cite{fang2021large}. 
            %
            In this experiment, we set key length as $2,048$ and $\psi$ as a large number with longer than $1,365$ bits (2/3 of the key length) to meet the requirements of both security and efficiency.
            For the \textit{finite field} of SS, we choose integers modulo as $2^{64}$ ($l = 64$), which is computationally beneficial \cite{cramer2018spd}.
            Additionally, we use 20 out of 64 bits to represent the fractional part.
            For \textit{multiplication triples generation}, we choose OT-based method to implement it \cite{gilboa1999two} and set the security parameter $\kappa = 128, \varphi=3,072$ according to \cite{barker2007nist}.
            With these parameters, the security lifetime of the algorithm can be extended after 2030.

            \begin{table}[t]
                \centering
                \setlength{\tabcolsep}{2.5mm}
                {
                    \begin{tabular}{cc|c|c|c|c}
                        \toprule
                        \multicolumn{2}{c|} {Parameters} & \multicolumn{3}{c|} {Ours} & \multirow{2}*{M-Kmeans} \\
                        $n$ & $k$ & Online & Offline & Total Time & ~\\
                        \midrule
                        \multirow{2}*{$10^4$} & 2  & {0.33} & {1.61} & {1.94} & \textbf{1.92} \\
                        ~ & 5 & {0.94} & {4.70} & {\textbf{5.64}} & 5.81 \\
                        \midrule
                        \multirow{2}*{$10^5$} & 2  & {3.12} & {15.19} & {18.31} & \textbf{18.02} \\
                        ~ & 5 & {9.06} & {48.39} & {\textbf{57.45}} & 58.09 \\
                        \bottomrule
                    \end{tabular}  
                }
            
            \caption{Comparison of running time (in minutes) on synthetic data ($t=10$, $l=64$).}
            \label{tab:re1}
            \vskip -0.2in
            \end{table}
      
            \begin{table}[t]
                \centering
                \setlength{\tabcolsep}{2.5mm}
                {
                    \begin{tabular}{cc|c|c|c|c}
                        \toprule
                        \multicolumn{2}{c|} {Parameters} & \multicolumn{3}{c|} {Ours} & \multirow{2}*{M-Kmeans} \\
                        $n$ & $k$ & Online & Offline & Total Time & ~\\
                        
                        \midrule
                        \multirow{2}*{$10^4$} & 2  & {1,084} & {3,660} & \textbf{4,744} & 5,118 \\
                        ~ & 5  & {3,156} & {12,900} & {\textbf{16,056}} & 18,632 \\

                        \midrule
                        \multirow{2}*{$10^5$} & 2  & {14,147} & {32,598} & \textbf{46,745} & 47,342 \\
                        ~ & 5  & {33,572} & {131,243} & {\textbf{164,815}} & 192,192 \\
                        \bottomrule
                    \end{tabular}
                }
            \caption{Comparison of communication size (in MB) on synthetic data ($t=10$, $l=64$).}
            \label{tab:re2}
            \vskip -0.3in
            \end{table}
        \nosection{Dataset}
            We conduct experiments on 4 datasets, including 1 real-world dataset and 3 synthetic datasets. 
            %
            %
            To compare the online-offline setting (in Q1\&Q2) and verify the performance with different parameters (e.g., dimension, sample size, cluster number, and sparse degree in Q3\&Q4), we choose to generate the required data.
            We also take two parties as an example, which is easy to expand to multi-parties.
            %
            For the vertically and horizontally partitioned setting, the only difference is how Eq.~\eqref{Eq:Vectorized ESD} and Eq.~\eqref{Eq:SCU} are calculated, as we have described in Section \ref{sec:lloid}. 
            %
            %
            Since computation and communication costs of both settings are almost equivalent, without loss of generality, we vertically split these datasets during experiments. 

        \nosection{Benchmarks}
        The MPC-based K-means protocol in \cite{mohassel2020practical} is the most efficient private K-means scheme among the work that is provable secure.
        %
        They proposed an ingenious distance computation protocol for centroids to all samples and built a customized garbled circuit to compute binary secret sharing of the minimum.
        In the remainder of this paper, we term it as M-Kmeans and will compare it in terms of efficiency.
        %
        %
        %
        Specifically, we use the publicly available implementation\footnote{https://github.com/osu-crypto/secure-kmean-clustering}. 
        %
        As suggested in their paper, we set its computational security parameter $\kappa = 128$, and the bitlength $l = 32$.
        
    \subsection{Comparison with M-Kmeans (Q1)}
        \nosection{Methods}
            We compare our model with M-Kmeans in terms of the \textit{run time} and \textit{communication}. 
            Specifically, we show the results of the online phase, offline phase, and entire process of our proposed model and compare them with the total process of M-Kmeans. 
            For the sake of fairness, we follow the paper of M-Kmeans \cite{mohassel2020practical} to run experiments on the synthetic dataset under LAN. 
            The synthetic data is generated from $k \in \{2, 5\}$ clusters with the sample size $n \in \{10^4, 10^5\}$ and feature dimension $d=2$.
            Since the running time and communication increase linearly with the number of iterations, we fix iteration $t=10$.  
        \nosection{Results}
            We report the results of running time and communication cost in Table \ref{tab:re1} and \ref{tab:re2} respectively. 
            From them, we can see that 
            (1) the overall cost (both running time and communication) of our model and M-Kmeans is of the same order of magnitude,
            (2) the online phase of our model is extremely efficient. Our model is about 5x - 6x faster than M-Kmeans in terms of running time and communication volume.
            %
            %
            Note that if there is a trusted third party that does the offline phase (Beaver's triples generation), the overall efficiency will improve further. 
    
        \subsection{Study of Online-Offline Setting (Q2)}
            \nosection{Methods}
            We examine the effect of online-offline framework from two aspects: \textit{running time} and \textit{communication}.
            To do it, we run experiments on a synthetic dataset consisting of 1,000 data points with 4 clusters and 2 dimensions in the WAN setting and iteration $t=10$.
            %
            Recall that secure Lloyd’s iteration consists of three steps, i.e., \textit{secure distance computation} (S1), \textit{secure clusters assignment} (S2), and \textit{secure centroids update} (S3), as described in Section \ref{sec:lloid}. 
            To study the performance of each step, we report results in Figure \ref{fig:on_off_cmp}. 
             
            \nosection{Results}
            As clearly seen from Figure \ref{fig:on_off_cmp}, the data-independent offline phase is time-consuming and communication-consuming, mainly because of the complicated cryptographic operations. 
            We can conclude that after the time-consuming offline phase is ready, the data-dependent online phase is significantly efficient, which is suitable for most practical applications. 

            \begin{figure}[t] 
            \centering
            \subfigure[Running Time]{
                \label{fig:comp_time}
                \includegraphics[height=2.5cm]{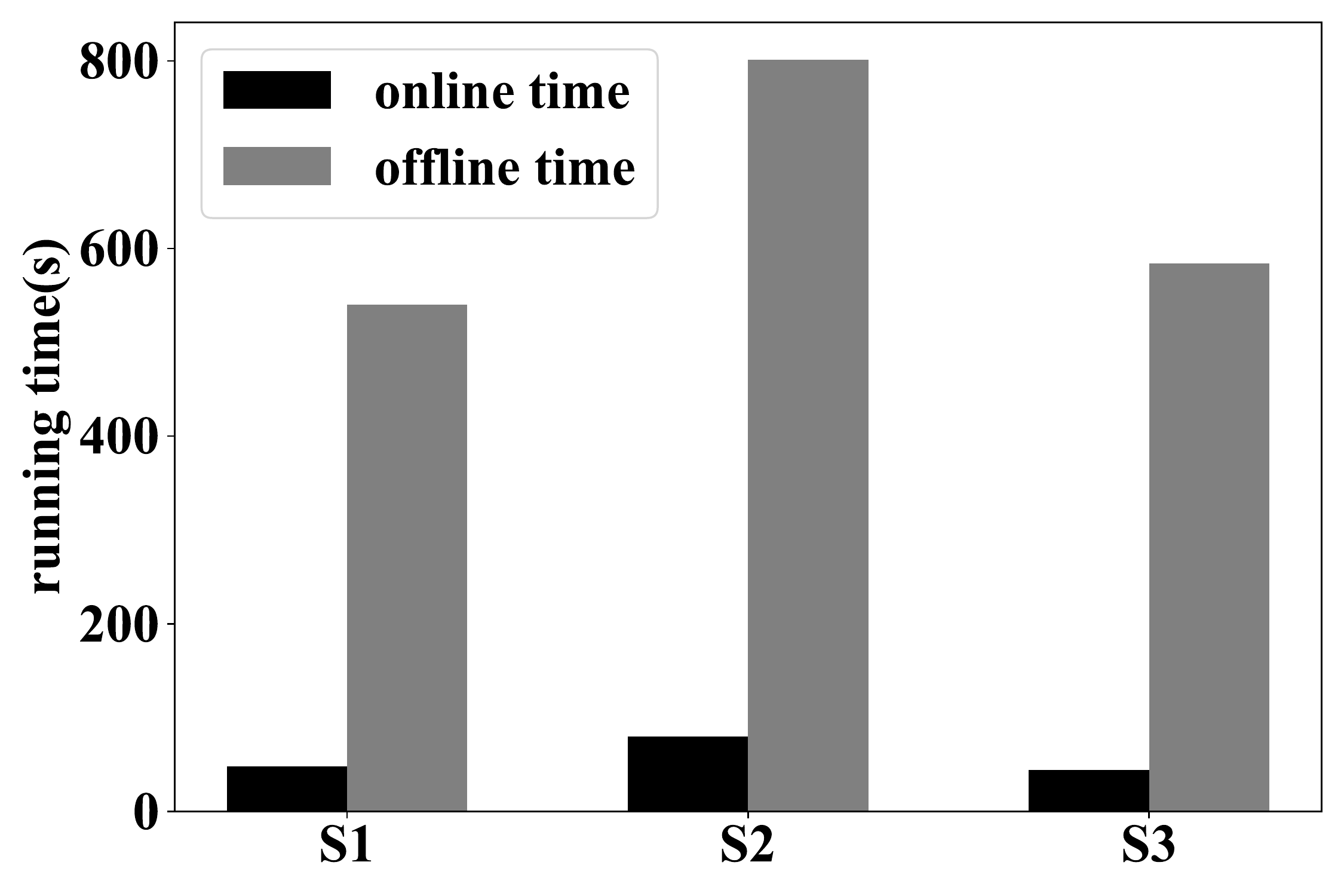}
            }\hspace{0mm}
            \subfigure[Communication]{
                \label{fig:comp_comm}
               ~~~~~ \includegraphics[height=2.5cm]{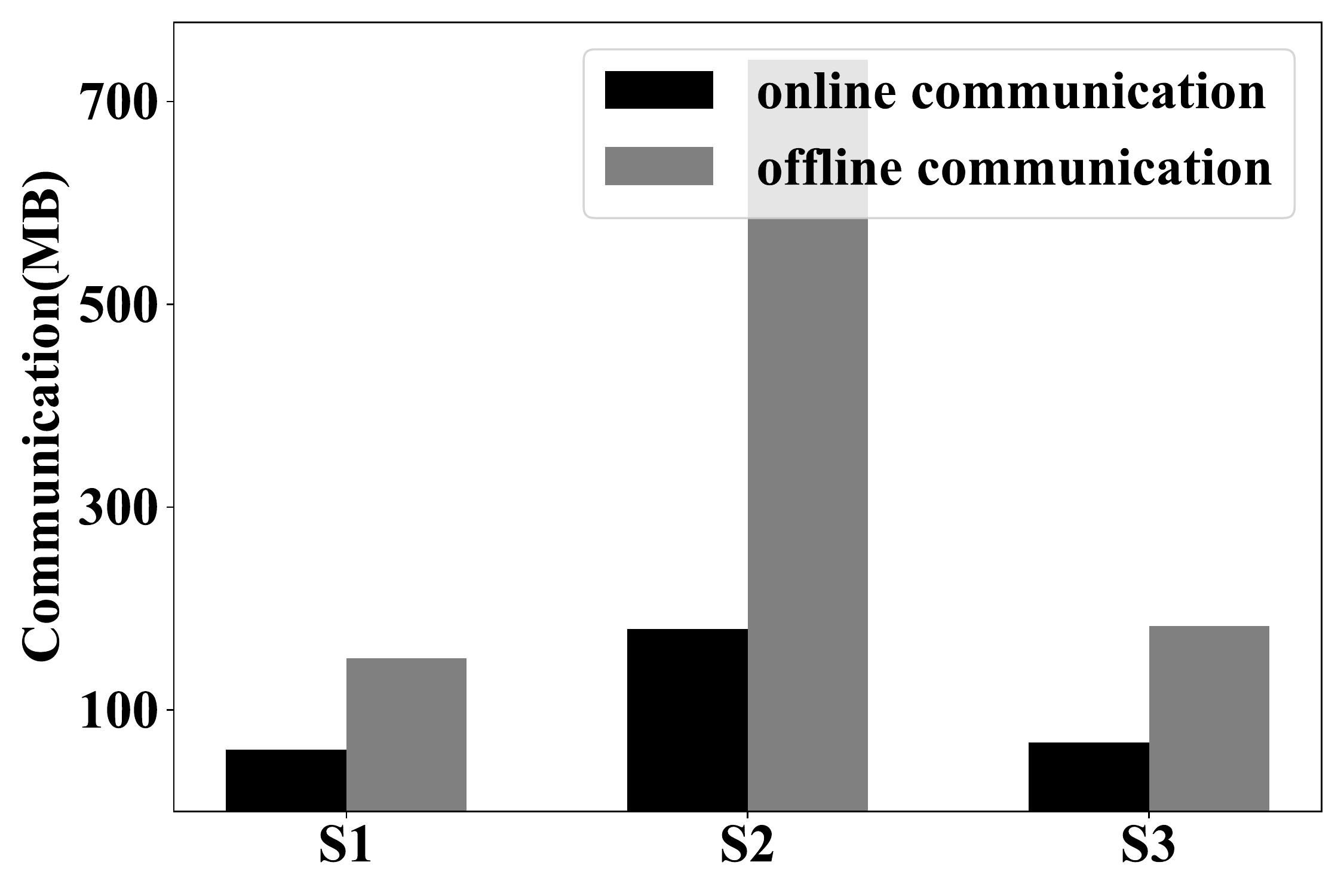}
            }\hspace{0mm}
            \caption{Comparison of online and offline phase in different steps ($n=1,000$, $d=2$, $k=4$, $t=20$, $l=64$).}
            \label{fig:on_off_cmp}
            \vskip -0.1in
            \end{figure}

        \subsection{Experiments for Vectorization (Q3)}
            \nosection{Methods}
            To test the effectiveness of vectorization, we compare running time before and after vectorization under the WAN setting.
            The synthetic data is generated for $4$ clusters with $1,000$ samples and feature dimension $d \in \{2, 4, 6, 8\}$.
            Without loss of generality, we chose the distance calculation step, for example.
            %

            \nosection{Results}
            Figure \ref{fig:vec_cmp} shows the speedup gained from vectorization. 
            We can conclude from it that there are significant improvements for both online phase and offline phases. 
            Compared with the numerical operation, the running time after vectorization increases slower with the growth of the feature dimension.
            In other words, the larger feature dimension, the more improvement of vectorization.
            \begin{figure} 
            \centering
            \subfigure[Online Phase]{
                \label{fig:online_vec}
                \includegraphics[width=0.45\columnwidth]{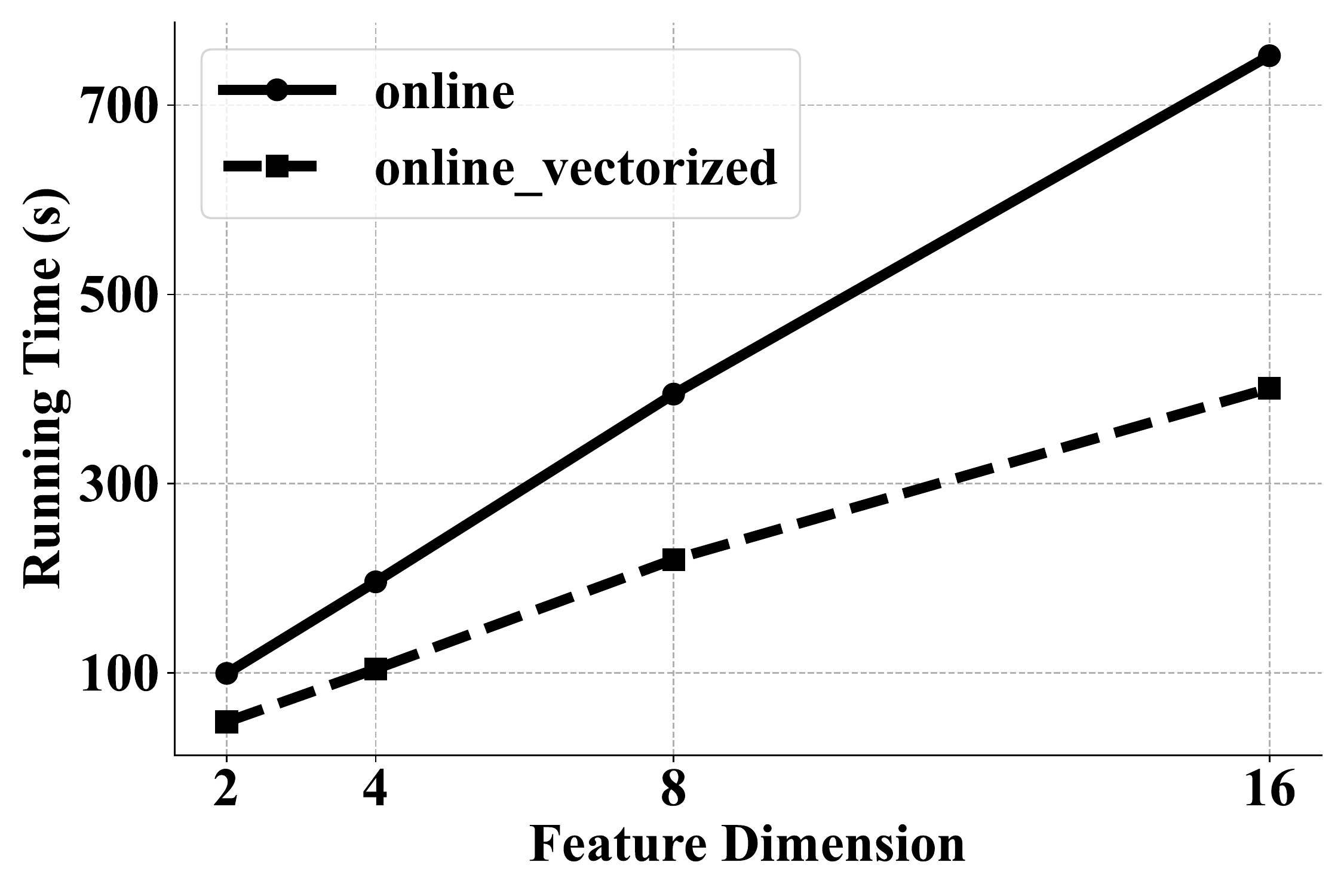}
            }
            \subfigure[Offline Phase]{
                \label{fig:offline_vec}
               ~~~~~ \includegraphics[width=0.45\columnwidth]{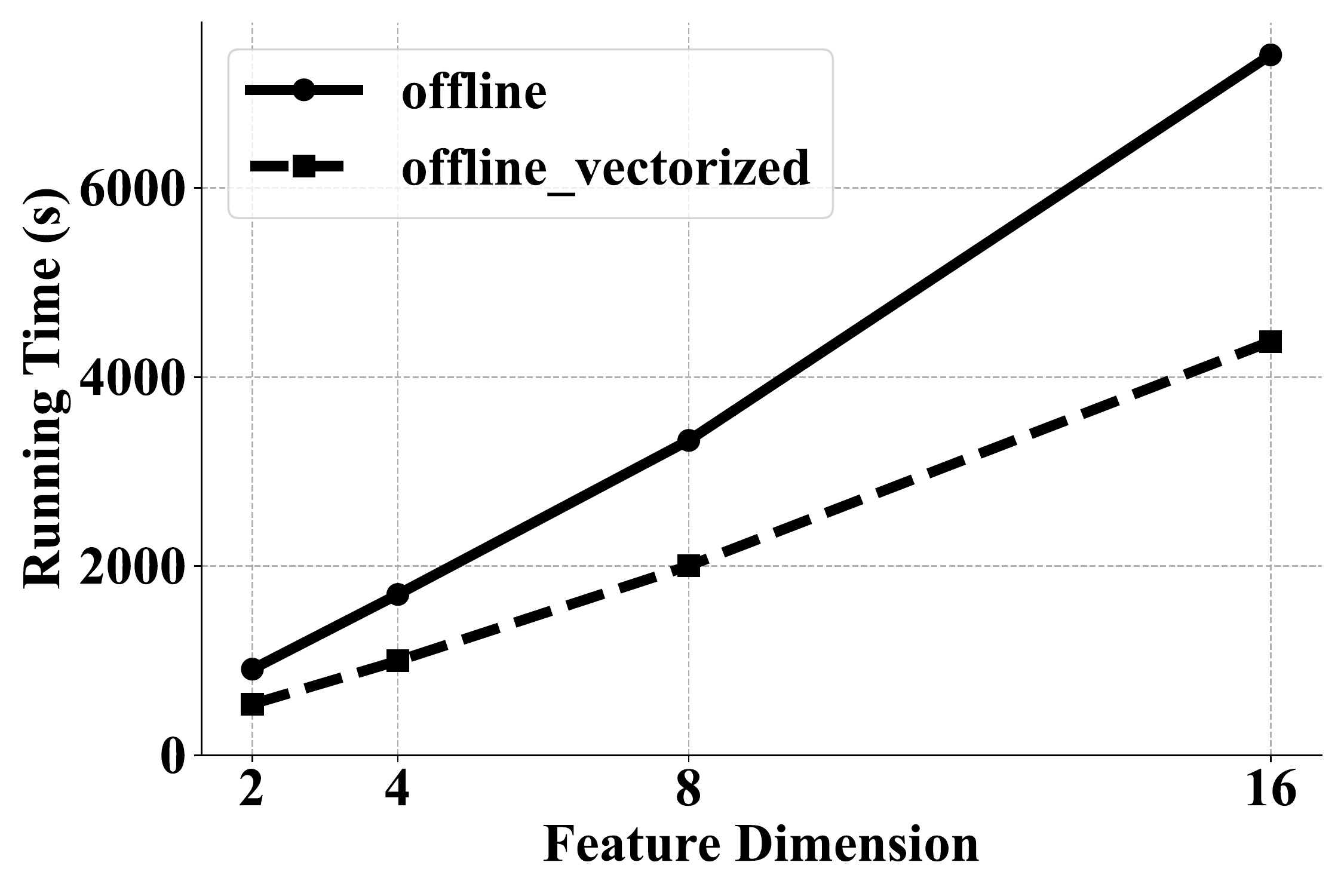}
            }
            \vskip -0.15in
            \caption{Study of vectorization with different feature dimension ($n=10^3$, $k=4$, $t=20$).}
            \label{fig:vec_cmp}
            \end{figure}
            
        \subsection{Experiments for Sparse Optimization (Q4)}
            
            \nosection{Methods}
            \textbf{(a)} To study our model acceleration in sparse scenarios, we first compare the online running time of our model with and without sparse optimization on a synthetic sparse dataset (with sparse degree 0.2, that is, 20\% of the elements are 0) in WAN setting. As usual, we fix the sample size to $10^6$, cluster number to 2, and iteration to 10. 
            \textbf{(b)} Secondly, we vary the sparsity degree in \{0, 0.5, 0.9, and 0.99\} and sample size from $10^6$ to $5 \times 10^6$ to test the effectiveness of our sparse optimization.
            %
            %
            We also choose the distance calculation step, for example.
            
            \nosection{Results}    
            \textbf{(a)} We can find from Figure \ref{fig:sparse_dim} that both models scale linearly with feature dimension, but the slope of our model with sparse optimization is smaller than that without sparse optimization. 
            %
            %
            The result proves the scalability of sparse optimization. 
            \textbf{(b)}
            The results shown in Figure \ref{fig:sparse_degree} reveal that our spare optimization can significantly improve efficiency, especially when data sparsity is severe. 
            With the increase in sample size, the improvement becomes more evident, which again proves that our model has good scalability in the large-scale data-sparse scenario. 
                
            
            \begin{figure}[t] 
            \centering
            \subfigure[Varying Feature-Dim]{
                \label{fig:sparse_dim}
                \includegraphics[width=0.45\columnwidth]{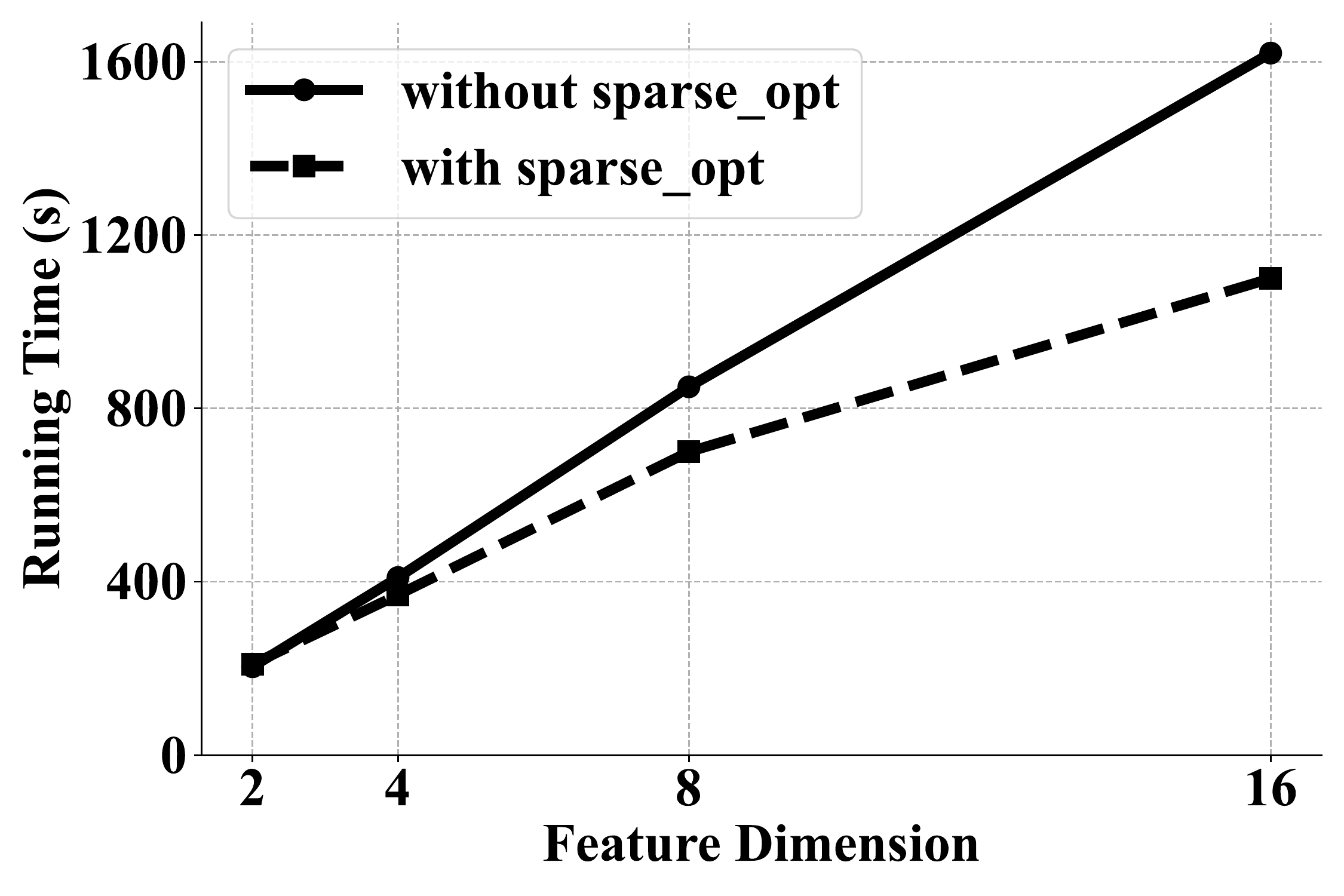}
            }\hspace{0mm}
            \subfigure[Varying Sparse Degree]{
                \label{fig:sparse_degree}
               ~~~~~ \includegraphics[width=0.45\columnwidth]{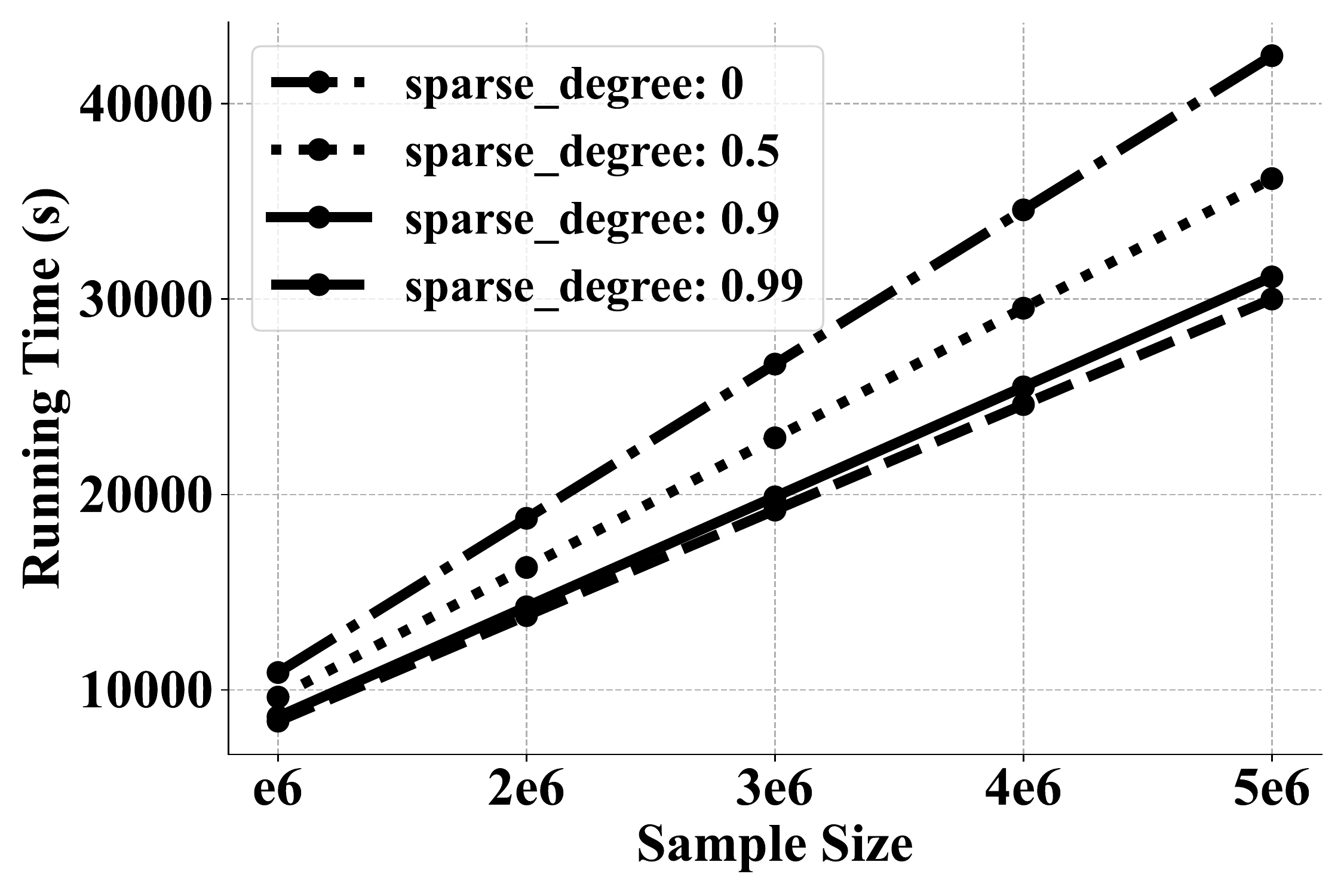}
            }\hspace{0mm}
            \caption{Study of effectiveness of sparse optimization.}
            \label{fig:sparse_cmp}
            \end{figure}
 


        \subsection{Deployment in Fraud Detection (Q5)}
            \nosection{Scenario}
            %
            There is a payment company that provides online payment services, whose customers include both individual users and large merchants.
            Users can make online transactions with merchants through it.
            Therefore, rich features, including user features and transaction (context) features in the payment company and merchant, are the key to building intelligent fraud detection models. 
            However, due to the data isolation problem, these data cannot be shared directly. 
            To build a more intelligent fraud detection task, we deploy our work to the above two companies to collaboratively build privacy-preserving K-means clustering.
            
            \nosection{Methods and Data}
            We examine the effectiveness of our framework from two aspects.
            First, we compare its accuracy with M-Kmeans to verify its correctness.
            Second, we compare it with the plaintext K-means using the payment company data only to show the improvements in multi-party modeling.
            The real-world dataset consists of 10,000 data points and 42 dimensions, where the payment company has 18 transaction features (e.g., transaction amount) and partial user features (e.g., user age). In contrast, the merchant has 24 other partial user behavior features (e.g., visiting count).
            Since fraud patterns are diverse and transient, the latest data is used in the application, and 90\% of them have no label.
            %
            %
            In order to compare the effect of multi-party modeling on accuracy improvement, the data used in this paper was collected from earlier happened transactions. 
            We can use the Jaccard coefficient to measure the difference between the outliers found by the algorithm and the ground-truth outliers.
            The Jaccard coefficient $(J)$ is defined as:$
            J(R, R^*) = \frac{|R \cap R^*|}{|R \cup R^*|}$, where $R$ is the set of outliers returned by the algorithm, and $R^*$ is the ground-truth.
            Note that by design, $ 0\leq J(R, R^*)\leq 1$. %
            The higher the value, the closer the two sets are.
            
            \nosection{Results}
            We performed 10 runs for each experiment and report the average. 
            The Jaccard coefficient of our framework is $0.86$, while the result of M-Kmeans has the same performance with $0.83$, which proves that our framework can achieve satisfying performance.
            The result of the model using payment company data only is $0.62$, which is much lower than that of joint modeling.
            The results are easy to interpret: more valuable features will naturally improve fraud detection ability.
            Traditionally, the payment company can build the K-means model using its plaintext features only. 
            With our framework, it can build a better model together with the merchant without compromising their private data. 

\section{Conclusion}

In this paper, 
to solve the efficiency and security problem of the existing secure and privacy-preserving K-means models, 
we propose a novel online-offline vectorized framework to build efficient K-means that provide the fully privacy guarantee.
Especially, for the data-sparse scenario in fraud detection task, we further adopt the large-scale sparse matrix multiplication in K-means model which combines Homomorphic Encryption (HE) and Secret Sharing (SS) to achieves both efficiency and security.
We conduct comprehensive experiments with three synthetic datasets and deploy our model in a real-world fraud detection task. 
Our experimental results show that the execution time of our online phase is 5x faster than the state-of-the-art solution in average while the overall running time is about the same.
In practice, the offline process can be completed in advance.
For the dataset with a certain sparse degree, as the data size increases, the advantages of our solution become more and more obvious.
The results show the efficiency and scalability of our proposed privacy-preserving K-means.


\bibliographystyle{ACM-Reference-Format}
\bibliography{cas-refs}


\begin{thebibliography}{48}


\ifx \showCODEN    \undefined \def \showCODEN     #1{\unskip}     \fi
\ifx \showDOI      \undefined \def \showDOI       #1{#1}\fi
\ifx \showISBNx    \undefined \def \showISBNx     #1{\unskip}     \fi
\ifx \showISBNxiii \undefined \def \showISBNxiii  #1{\unskip}     \fi
\ifx \showISSN     \undefined \def \showISSN      #1{\unskip}     \fi
\ifx \showLCCN     \undefined \def \showLCCN      #1{\unskip}     \fi
\ifx \shownote     \undefined \def \shownote      #1{#1}          \fi
\ifx \showarticletitle \undefined \def \showarticletitle #1{#1}   \fi
\ifx \showURL      \undefined \def \showURL       {\relax}        \fi
\providecommand\bibfield[2]{#2}
\providecommand\bibinfo[2]{#2}
\providecommand\natexlab[1]{#1}
\providecommand\showeprint[2][]{arXiv:#2}

\bibitem[\protect\citeauthoryear{Acar, Aksu, Uluagac, and Conti}{Acar
  et~al\mbox{.}}{2018}]%
        {acar2018survey}
\bibfield{author}{\bibinfo{person}{Abbas Acar}, \bibinfo{person}{Hidayet Aksu},
  \bibinfo{person}{A~Selcuk Uluagac}, {and} \bibinfo{person}{Mauro Conti}.}
  \bibinfo{year}{2018}\natexlab{}.
\newblock \showarticletitle{A survey on homomorphic encryption schemes: Theory
  and implementation}.
\newblock \bibinfo{journal}{\emph{ACM Comput Surv}} \bibinfo{volume}{51},
  \bibinfo{number}{4} (\bibinfo{year}{2018}), \bibinfo{pages}{1--35}.
\newblock


\bibitem[\protect\citeauthoryear{Almutairi, Coenen, and Dures}{Almutairi
  et~al\mbox{.}}{2017}]%
        {almutairi2017k}
\bibfield{author}{\bibinfo{person}{Nawal Almutairi}, \bibinfo{person}{Frans
  Coenen}, {and} \bibinfo{person}{Keith Dures}.}
  \bibinfo{year}{2017}\natexlab{}.
\newblock \showarticletitle{K-means clustering using homomorphic encryption and
  an updatable distance matrix: secure third party data clustering with limited
  data owner interaction}. In \bibinfo{booktitle}{\emph{International
  Conference on Big Data Analytics and Knowledge Discovery}}. Springer,
  \bibinfo{pages}{274--285}.
\newblock


\bibitem[\protect\citeauthoryear{Araki, Furukawa, Lindell, Nof, and
  Ohara}{Araki et~al\mbox{.}}{2016}]%
        {araki2016high}
\bibfield{author}{\bibinfo{person}{Toshinori Araki}, \bibinfo{person}{Jun
  Furukawa}, \bibinfo{person}{Yehuda Lindell}, \bibinfo{person}{Ariel Nof},
  {and} \bibinfo{person}{Kazuma Ohara}.} \bibinfo{year}{2016}\natexlab{}.
\newblock \showarticletitle{High-throughput semi-honest secure three-party
  computation with an honest majority}. In
  \bibinfo{booktitle}{\emph{Proceedings of the 2016 ACM SIGSAC Conference on
  Computer and Communications Security}}. \bibinfo{pages}{805--817}.
\newblock


\bibitem[\protect\citeauthoryear{Barker, Barker, Burr, Polk, and Smid}{Barker
  et~al\mbox{.}}{2007}]%
        {barker2007nist}
\bibfield{author}{\bibinfo{person}{Elaine Barker}, \bibinfo{person}{William
  Barker}, \bibinfo{person}{William Burr}, \bibinfo{person}{William Polk},
  {and} \bibinfo{person}{Miles Smid}.} \bibinfo{year}{2007}\natexlab{}.
\newblock \showarticletitle{NIST special publication 800-57}.
\newblock \bibinfo{journal}{\emph{NIST Special publication}}
  \bibinfo{volume}{800}, \bibinfo{number}{57} (\bibinfo{year}{2007}),
  \bibinfo{pages}{1--142}.
\newblock


\bibitem[\protect\citeauthoryear{Beaver}{Beaver}{1991}]%
        {beaver1991efficient}
\bibfield{author}{\bibinfo{person}{Donald Beaver}.}
  \bibinfo{year}{1991}\natexlab{}.
\newblock \showarticletitle{Efficient multiparty protocols using circuit
  randomization}. In \bibinfo{booktitle}{\emph{CRYPTO rence}}. Springer,
  \bibinfo{pages}{420--432}.
\newblock


\bibitem[\protect\citeauthoryear{Brickell and Shmatikov}{Brickell and
  Shmatikov}{2005}]%
        {brickell2005privacy}
\bibfield{author}{\bibinfo{person}{Justin Brickell} {and}
  \bibinfo{person}{Vitaly Shmatikov}.} \bibinfo{year}{2005}\natexlab{}.
\newblock \showarticletitle{Privacy-preserving graph algorithms in the
  semi-honest model}. In \bibinfo{booktitle}{\emph{International Conference on
  the Theory and Application of Cryptology and Information Security}}.
  Springer, \bibinfo{pages}{236--252}.
\newblock


\bibitem[\protect\citeauthoryear{Bunn and Ostrovsky}{Bunn and
  Ostrovsky}{2007}]%
        {bunn2007secure}
\bibfield{author}{\bibinfo{person}{Paul Bunn} {and} \bibinfo{person}{Rafail
  Ostrovsky}.} \bibinfo{year}{2007}\natexlab{}.
\newblock \showarticletitle{Secure two-party k-means clustering}. In
  \bibinfo{booktitle}{\emph{Proceedings of the 14th ACM conference on Computer
  and communications security}}. \bibinfo{pages}{486--497}.
\newblock


\bibitem[\protect\citeauthoryear{Bunn and Ostrovsky}{Bunn and
  Ostrovsky}{2020}]%
        {DBLP:journals/joc/BunnO20}
\bibfield{author}{\bibinfo{person}{Paul Bunn} {and} \bibinfo{person}{Rafail
  Ostrovsky}.} \bibinfo{year}{2020}\natexlab{}.
\newblock \showarticletitle{Oblivious Sampling with Applications to Two-Party
  k-Means Clustering}.
\newblock \bibinfo{journal}{\emph{J. Cryptol.}} \bibinfo{volume}{33},
  \bibinfo{number}{3} (\bibinfo{year}{2020}), \bibinfo{pages}{1362--1403}.
\newblock
\urldef\tempurl%
\url{https://doi.org/10.1007/s00145-020-09349-w}
\showDOI{\tempurl}


\bibitem[\protect\citeauthoryear{Chawla and Gionis}{Chawla and Gionis}{2013}]%
        {chawla2013k}
\bibfield{author}{\bibinfo{person}{Sanjay Chawla} {and}
  \bibinfo{person}{Aristides Gionis}.} \bibinfo{year}{2013}\natexlab{}.
\newblock \showarticletitle{k-means--: A unified approach to clustering and
  outlier detection}. In \bibinfo{booktitle}{\emph{SDM13}}. SIAM,
  \bibinfo{pages}{189--197}.
\newblock


\bibitem[\protect\citeauthoryear{Chen, Zhou, xilinx Wang, Wu, Fang, Tan, Wang,
  Ji, Liu, Wang, and Hong}{Chen et~al\mbox{.}}{2021}]%
        {chen2020homomorphic}
\bibfield{author}{\bibinfo{person}{Chaochao Chen}, \bibinfo{person}{Jun Zhou},
  \bibinfo{person}{L. xilinx Wang}, \bibinfo{person}{Xibin Wu},
  \bibinfo{person}{Wenjing Fang}, \bibinfo{person}{Jin Tan},
  \bibinfo{person}{L. Wang}, \bibinfo{person}{Xiaoxi Ji},
  \bibinfo{person}{Alex~X. Liu}, \bibinfo{person}{Hao Wang}, {and}
  \bibinfo{person}{Cheng Hong}.} \bibinfo{year}{2021}\natexlab{}.
\newblock \showarticletitle{When Homomorphic Encryption Marries Secret Sharing:
  Secure Large-Scale Sparse Logistic Regression and Applications in Risk
  Control}.
\newblock \bibinfo{journal}{\emph{Proceedings of the 27th ACM SIGKDD Conference
  on Knowledge Discovery \& Data Mining}} (\bibinfo{year}{2021}),
  \bibinfo{pages}{2652--2662}.
\newblock


\bibitem[\protect\citeauthoryear{Chen, Zhou, Zheng, Wu, Lyu, Wu, Wu, Liu, Wang,
  and Zheng}{Chen et~al\mbox{.}}{2022}]%
        {DBLP:conf/ijcai/0001ZZWLWWLWZ22}
\bibfield{author}{\bibinfo{person}{Chaochao Chen}, \bibinfo{person}{Jun Zhou},
  \bibinfo{person}{Longfei Zheng}, \bibinfo{person}{Huiwen Wu},
  \bibinfo{person}{Lingjuan Lyu}, \bibinfo{person}{Jia Wu},
  \bibinfo{person}{Bingzhe Wu}, \bibinfo{person}{Ziqi Liu}, \bibinfo{person}{Li
  Wang}, {and} \bibinfo{person}{Xiaolin Zheng}.}
  \bibinfo{year}{2022}\natexlab{}.
\newblock \showarticletitle{Vertically Federated Graph Neural Network for
  Privacy-Preserving Node Classification}. In
  \bibinfo{booktitle}{\emph{Proceedings of the Thirty-First International Joint
  Conference on Artificial Intelligence, {IJCAI} 2022, Vienna, Austria, 23-29
  July 2022}}, \bibfield{editor}{\bibinfo{person}{Luc~De Raedt}} (Ed.).
  \bibinfo{publisher}{ijcai.org}, \bibinfo{pages}{1959--1965}.
\newblock
\urldef\tempurl%
\url{https://doi.org/10.24963/ijcai.2022/272}
\showDOI{\tempurl}


\bibitem[\protect\citeauthoryear{Cramer, Damgrd, Escudero, Scholl, and
  Xing}{Cramer et~al\mbox{.}}{2018}]%
        {cramer2018spd}
\bibfield{author}{\bibinfo{person}{Ronald Cramer}, \bibinfo{person}{Ivan
  Damgrd}, \bibinfo{person}{Daniel Escudero}, \bibinfo{person}{Peter Scholl},
  {and} \bibinfo{person}{Chaoping Xing}.} \bibinfo{year}{2018}\natexlab{}.
\newblock \showarticletitle{SPDZ2\^k: Efficient MPC mod 2\^k for Dishonest
  Majority}. In \bibinfo{booktitle}{\emph{CRYPTO rence}}. Springer,
  \bibinfo{pages}{769--798}.
\newblock


\bibitem[\protect\citeauthoryear{Damg{\aa}rd, Keller, Larraia, Pastro, Scholl,
  and Smart}{Damg{\aa}rd et~al\mbox{.}}{2013}]%
        {damgaard2013practical}
\bibfield{author}{\bibinfo{person}{Ivan Damg{\aa}rd}, \bibinfo{person}{Marcel
  Keller}, \bibinfo{person}{Enrique Larraia}, \bibinfo{person}{Valerio Pastro},
  \bibinfo{person}{Peter Scholl}, {and} \bibinfo{person}{Nigel~P Smart}.}
  \bibinfo{year}{2013}\natexlab{}.
\newblock \showarticletitle{Practical covertly secure MPC for dishonest
  majority--or: breaking the SPDZ limits}. In
  \bibinfo{booktitle}{\emph{European Symposium on Research in Computer
  Security}}. Springer, \bibinfo{pages}{1--18}.
\newblock


\bibitem[\protect\citeauthoryear{Demmler, Schneider, and Zohner}{Demmler
  et~al\mbox{.}}{2015}]%
        {demmler2015aby}
\bibfield{author}{\bibinfo{person}{Daniel Demmler}, \bibinfo{person}{Thomas
  Schneider}, {and} \bibinfo{person}{Michael Zohner}.}
  \bibinfo{year}{2015}\natexlab{}.
\newblock \showarticletitle{ABY-A framework for efficient mixed-protocol secure
  two-party computation.}. In \bibinfo{booktitle}{\emph{NDSS}}.
\newblock


\bibitem[\protect\citeauthoryear{Faisal, Zamzami, et~al\mbox{.}}{Faisal
  et~al\mbox{.}}{2020}]%
        {faisal2020comparative}
\bibfield{author}{\bibinfo{person}{M Faisal}, \bibinfo{person}{EM Zamzami},
  {et~al\mbox{.}}} \bibinfo{year}{2020}\natexlab{}.
\newblock \showarticletitle{Comparative analysis of inter-centroid K-Means
  performance using euclidean distance, canberra distance and manhattan
  distance}. In \bibinfo{booktitle}{\emph{Journal of Physics: Conference
  Series}}, Vol.~\bibinfo{volume}{1566}. IOP Publishing,
  \bibinfo{pages}{012112}.
\newblock


\bibitem[\protect\citeauthoryear{Fang, Chen, Tan, Yu, Lu, Wang, Wang, Zhou,
  et~al\mbox{.}}{Fang et~al\mbox{.}}{2021}]%
        {fang2021large}
\bibfield{author}{\bibinfo{person}{Wenjing Fang}, \bibinfo{person}{Chaochao
  Chen}, \bibinfo{person}{Jin Tan}, \bibinfo{person}{Chaofan Yu},
  \bibinfo{person}{Yufei Lu}, \bibinfo{person}{Li Wang}, \bibinfo{person}{Lei
  Wang}, \bibinfo{person}{Jun Zhou}, {et~al\mbox{.}}}
  \bibinfo{year}{2021}\natexlab{}.
\newblock \showarticletitle{Large-scale Secure XGB for Vertical Federated
  Learning}. In \bibinfo{booktitle}{\emph{CIKM}}. \bibinfo{pages}{443--452}.
\newblock


\bibitem[\protect\citeauthoryear{Gilboa}{Gilboa}{1999}]%
        {gilboa1999two}
\bibfield{author}{\bibinfo{person}{Niv Gilboa}.}
  \bibinfo{year}{1999}\natexlab{}.
\newblock \showarticletitle{Two party RSA key generation}. In
  \bibinfo{booktitle}{\emph{Annual International Cryptology Conference}}.
  Springer, \bibinfo{pages}{116--129}.
\newblock


\bibitem[\protect\citeauthoryear{Goldreich}{Goldreich}{2009}]%
        {goldreich2009foundations}
\bibfield{author}{\bibinfo{person}{Oded Goldreich}.}
  \bibinfo{year}{2009}\natexlab{}.
\newblock \bibinfo{booktitle}{\emph{Foundations of cryptography: volume 2,
  basic applications}}.
\newblock \bibinfo{publisher}{Cambridge university press}.
\newblock


\bibitem[\protect\citeauthoryear{Jagannathan, Pillaipakkamnatt, Wright, and
  Umano}{Jagannathan et~al\mbox{.}}{2010}]%
        {jagannathan2010communication}
\bibfield{author}{\bibinfo{person}{Geetha Jagannathan},
  \bibinfo{person}{Krishnan Pillaipakkamnatt}, \bibinfo{person}{Rebecca~N
  Wright}, {and} \bibinfo{person}{Daryl Umano}.}
  \bibinfo{year}{2010}\natexlab{}.
\newblock \showarticletitle{Communication-efficient privacy-preserving
  clustering.}
\newblock \bibinfo{journal}{\emph{Trans. Data Priv.}} \bibinfo{volume}{3},
  \bibinfo{number}{1} (\bibinfo{year}{2010}), \bibinfo{pages}{1--25}.
\newblock


\bibitem[\protect\citeauthoryear{Jagannathan and Wright}{Jagannathan and
  Wright}{2005}]%
        {jagannathan2005privacy}
\bibfield{author}{\bibinfo{person}{Geetha Jagannathan} {and}
  \bibinfo{person}{Rebecca~N Wright}.} \bibinfo{year}{2005}\natexlab{}.
\newblock \showarticletitle{Privacy-preserving distributed k-means clustering
  over arbitrarily partitioned data}. In \bibinfo{booktitle}{\emph{Proceedings
  of the eleventh ACM SIGKDD international conference on Knowledge discovery in
  data mining}}. \bibinfo{pages}{593--599}.
\newblock


\bibitem[\protect\citeauthoryear{J{\"a}schke and Armknecht}{J{\"a}schke and
  Armknecht}{2018}]%
        {jaschke2018unsupervised}
\bibfield{author}{\bibinfo{person}{Angela J{\"a}schke} {and}
  \bibinfo{person}{Frederik Armknecht}.} \bibinfo{year}{2018}\natexlab{}.
\newblock \showarticletitle{Unsupervised machine learning on encrypted data}.
  In \bibinfo{booktitle}{\emph{SAC}}. Springer, \bibinfo{pages}{453--478}.
\newblock


\bibitem[\protect\citeauthoryear{Jha, Kruger, and McDaniel}{Jha
  et~al\mbox{.}}{2005}]%
        {jha2005privacy}
\bibfield{author}{\bibinfo{person}{Somesh Jha}, \bibinfo{person}{Luis Kruger},
  {and} \bibinfo{person}{Patrick McDaniel}.} \bibinfo{year}{2005}\natexlab{}.
\newblock \showarticletitle{Privacy preserving clustering}. In
  \bibinfo{booktitle}{\emph{European symposium on research in computer
  security}}. Springer, \bibinfo{pages}{397--417}.
\newblock


\bibitem[\protect\citeauthoryear{Kansal, Bahuguna, Singh, and Choudhury}{Kansal
  et~al\mbox{.}}{2018}]%
        {kansal2018customer}
\bibfield{author}{\bibinfo{person}{Tushar Kansal}, \bibinfo{person}{Suraj
  Bahuguna}, \bibinfo{person}{Vishal Singh}, {and} \bibinfo{person}{Tanupriya
  Choudhury}.} \bibinfo{year}{2018}\natexlab{}.
\newblock \showarticletitle{Customer segmentation using K-means clustering}. In
  \bibinfo{booktitle}{\emph{CTEMS}}. IEEE, \bibinfo{pages}{135--139}.
\newblock


\bibitem[\protect\citeauthoryear{Karnin, Greene, and Hellman}{Karnin
  et~al\mbox{.}}{1983}]%
        {karnin1983secret}
\bibfield{author}{\bibinfo{person}{Ehud Karnin}, \bibinfo{person}{Jonathan
  Greene}, {and} \bibinfo{person}{Martin Hellman}.}
  \bibinfo{year}{1983}\natexlab{}.
\newblock \showarticletitle{On secret sharing systems}.
\newblock \bibinfo{journal}{\emph{IEEE Transactions on Information Theory}}
  \bibinfo{volume}{29}, \bibinfo{number}{1} (\bibinfo{year}{1983}),
  \bibinfo{pages}{35--41}.
\newblock


\bibitem[\protect\citeauthoryear{Knott, Venkataraman, Hannun, Sengupta,
  Ibrahim, and van~der Maaten}{Knott et~al\mbox{.}}{2021}]%
        {knott2021crypten}
\bibfield{author}{\bibinfo{person}{Brian Knott}, \bibinfo{person}{Shobha
  Venkataraman}, \bibinfo{person}{Awni Hannun}, \bibinfo{person}{Shubho
  Sengupta}, \bibinfo{person}{Mark Ibrahim}, {and} \bibinfo{person}{Laurens
  van~der Maaten}.} \bibinfo{year}{2021}\natexlab{}.
\newblock \showarticletitle{Crypten: Secure multi-party computation meets
  machine learning}.
\newblock \bibinfo{journal}{\emph{Advances in Neural Information Processing
  Systems}}  \bibinfo{volume}{34} (\bibinfo{year}{2021}),
  \bibinfo{pages}{4961--4973}.
\newblock


\bibitem[\protect\citeauthoryear{Kumari, Prasad, and Mounika}{Kumari
  et~al\mbox{.}}{2019}]%
        {kumari2019leaf}
\bibfield{author}{\bibinfo{person}{Ch~Usha Kumari}, \bibinfo{person}{S~Jeevan
  Prasad}, {and} \bibinfo{person}{G Mounika}.} \bibinfo{year}{2019}\natexlab{}.
\newblock \showarticletitle{Leaf disease detection: feature extraction with
  K-means clustering and classification with ANN}. In
  \bibinfo{booktitle}{\emph{2019 3rd International Conference on Computing
  Methodologies and Communication (ICCMC)}}. IEEE, \bibinfo{pages}{1095--1098}.
\newblock


\bibitem[\protect\citeauthoryear{Li, Sahu, Talwalkar, and Smith}{Li
  et~al\mbox{.}}{2020}]%
        {li2020federated}
\bibfield{author}{\bibinfo{person}{Tian Li}, \bibinfo{person}{Anit~Kumar Sahu},
  \bibinfo{person}{Ameet Talwalkar}, {and} \bibinfo{person}{Virginia Smith}.}
  \bibinfo{year}{2020}\natexlab{}.
\newblock \showarticletitle{Federated learning: Challenges, methods, and future
  directions}.
\newblock \bibinfo{journal}{\emph{IEEE Signal Processing Magazine}}
  \bibinfo{volume}{37}, \bibinfo{number}{3} (\bibinfo{year}{2020}),
  \bibinfo{pages}{50--60}.
\newblock


\bibitem[\protect\citeauthoryear{Li, Duan, Yu, Zhao, and Xu}{Li
  et~al\mbox{.}}{2018}]%
        {li2018privpy}
\bibfield{author}{\bibinfo{person}{Yi Li}, \bibinfo{person}{Yitao Duan},
  \bibinfo{person}{Yu Yu}, \bibinfo{person}{Shuoyao Zhao}, {and}
  \bibinfo{person}{Wei Xu}.} \bibinfo{year}{2018}\natexlab{}.
\newblock \showarticletitle{PrivPy: Enabling Scalable and General
  Privacy-Preserving Machine Learning}.
\newblock \bibinfo{journal}{\emph{arXiv preprint arXiv:1801.10117}}
  (\bibinfo{year}{2018}).
\newblock


\bibitem[\protect\citeauthoryear{Lin and Jaromczyk}{Lin and Jaromczyk}{2011}]%
        {lin2011privacy}
\bibfield{author}{\bibinfo{person}{Zhenmin Lin} {and} \bibinfo{person}{Jerzy~W
  Jaromczyk}.} \bibinfo{year}{2011}\natexlab{}.
\newblock \showarticletitle{Privacy preserving two-party k-means clustering
  over vertically partitioned dataset}. In \bibinfo{booktitle}{\emph{ISI
  2011}}. IEEE, \bibinfo{pages}{187--191}.
\newblock


\bibitem[\protect\citeauthoryear{Liu, Chang, and Li}{Liu et~al\mbox{.}}{2013}]%
        {liu2013clustering}
\bibfield{author}{\bibinfo{person}{Chien-Liang Liu}, \bibinfo{person}{Tao-Hsing
  Chang}, {and} \bibinfo{person}{Hsuan-Hsun Li}.}
  \bibinfo{year}{2013}\natexlab{}.
\newblock \showarticletitle{Clustering documents with labeled and unlabeled
  documents using fuzzy semi-Kmeans}.
\newblock \bibinfo{journal}{\emph{Fuzzy Sets and Systems}}
  \bibinfo{volume}{221} (\bibinfo{year}{2013}), \bibinfo{pages}{48--64}.
\newblock


\bibitem[\protect\citeauthoryear{MacQueen et~al\mbox{.}}{MacQueen
  et~al\mbox{.}}{1967}]%
        {macqueen1967some}
\bibfield{author}{\bibinfo{person}{James MacQueen} {et~al\mbox{.}}}
  \bibinfo{year}{1967}\natexlab{}.
\newblock \showarticletitle{Some methods for classification and analysis of
  multivariate observations}. In \bibinfo{booktitle}{\emph{BSMSP}}. Oakland,
  CA, USA, \bibinfo{pages}{281--297}.
\newblock


\bibitem[\protect\citeauthoryear{Min and Lin}{Min and Lin}{2018}]%
        {min2018k}
\bibfield{author}{\bibinfo{person}{Xing Min} {and} \bibinfo{person}{Rongheng
  Lin}.} \bibinfo{year}{2018}\natexlab{}.
\newblock \showarticletitle{K-means algorithm: fraud detection based on
  signaling data}. In \bibinfo{booktitle}{\emph{2018 IEEE World Congress on
  Services (SERVICES)}}. IEEE, \bibinfo{pages}{21--22}.
\newblock


\bibitem[\protect\citeauthoryear{Mohassel, Rosulek, and Trieu}{Mohassel
  et~al\mbox{.}}{2020}]%
        {mohassel2020practical}
\bibfield{author}{\bibinfo{person}{Payman Mohassel}, \bibinfo{person}{Mike
  Rosulek}, {and} \bibinfo{person}{Ni Trieu}.} \bibinfo{year}{2020}\natexlab{}.
\newblock \showarticletitle{Practical privacy-preserving k-means clustering}.
\newblock \bibinfo{journal}{\emph{Proceedings on Privacy Enhancing
  Technologies}} \bibinfo{volume}{2020}, \bibinfo{number}{4}
  (\bibinfo{year}{2020}), \bibinfo{pages}{414--433}.
\newblock


\bibitem[\protect\citeauthoryear{Mohassel and Zhang}{Mohassel and
  Zhang}{2017}]%
        {mohassel2017secureml}
\bibfield{author}{\bibinfo{person}{Payman Mohassel} {and}
  \bibinfo{person}{Yupeng Zhang}.} \bibinfo{year}{2017}\natexlab{}.
\newblock \showarticletitle{Secureml: A system for scalable privacy-preserving
  machine learning}. In \bibinfo{booktitle}{\emph{S\&P}}. IEEE,
  \bibinfo{pages}{19--38}.
\newblock


\bibitem[\protect\citeauthoryear{Na, Xumin, and Yong}{Na et~al\mbox{.}}{2010}]%
        {na2010research}
\bibfield{author}{\bibinfo{person}{Shi Na}, \bibinfo{person}{Liu Xumin}, {and}
  \bibinfo{person}{Guan Yong}.} \bibinfo{year}{2010}\natexlab{}.
\newblock \showarticletitle{Research on k-means clustering algorithm: An
  improved k-means clustering algorithm}. In \bibinfo{booktitle}{\emph{2010
  Third International Symposium on intelligent information technology and
  security informatics}}. Ieee, \bibinfo{pages}{63--67}.
\newblock


\bibitem[\protect\citeauthoryear{Okamoto and Uchiyama}{Okamoto and
  Uchiyama}{1998}]%
        {okamoto1998new}
\bibfield{author}{\bibinfo{person}{Tatsuaki Okamoto} {and}
  \bibinfo{person}{Shigenori Uchiyama}.} \bibinfo{year}{1998}\natexlab{}.
\newblock \showarticletitle{A new public-key cryptosystem as secure as
  factoring}. In \bibinfo{booktitle}{\emph{International conference on the
  theory and applications of cryptographic techniques}}. Springer,
  \bibinfo{pages}{308--318}.
\newblock


\bibitem[\protect\citeauthoryear{Paillier}{Paillier}{1999}]%
        {paillier1999public}
\bibfield{author}{\bibinfo{person}{Pascal Paillier}.}
  \bibinfo{year}{1999}\natexlab{}.
\newblock \showarticletitle{Public-key cryptosystems based on composite degree
  residuosity classes}. In \bibinfo{booktitle}{\emph{Eurocrypt}}. Springer,
  \bibinfo{pages}{223--238}.
\newblock


\bibitem[\protect\citeauthoryear{Patel, Garasia, and Jinwala}{Patel
  et~al\mbox{.}}{2012}]%
        {patel2012efficient}
\bibfield{author}{\bibinfo{person}{Sankita Patel}, \bibinfo{person}{Sweta
  Garasia}, {and} \bibinfo{person}{Devesh Jinwala}.}
  \bibinfo{year}{2012}\natexlab{}.
\newblock \showarticletitle{An efficient approach for privacy preserving
  distributed k-means clustering based on shamir’s secret sharing scheme}. In
  \bibinfo{booktitle}{\emph{IFIP International Conference on Trust
  Management}}. Springer, \bibinfo{pages}{129--141}.
\newblock


\bibitem[\protect\citeauthoryear{Sardar and Ansari}{Sardar and Ansari}{2018}]%
        {sardar2018analysis}
\bibfield{author}{\bibinfo{person}{Tanvir~Habib Sardar} {and}
  \bibinfo{person}{Zahid Ansari}.} \bibinfo{year}{2018}\natexlab{}.
\newblock \showarticletitle{An analysis of MapReduce efficiency in document
  clustering using parallel K-means algorithm}.
\newblock \bibinfo{journal}{\emph{FCIJ}} \bibinfo{volume}{3},
  \bibinfo{number}{2} (\bibinfo{year}{2018}), \bibinfo{pages}{200--209}.
\newblock


\bibitem[\protect\citeauthoryear{Singh, Raman, Agarwal, and Atrey}{Singh
  et~al\mbox{.}}{2017}]%
        {singh2017secure}
\bibfield{author}{\bibinfo{person}{Priyanka Singh},
  \bibinfo{person}{Balasubramanian Raman}, \bibinfo{person}{Nishant Agarwal},
  {and} \bibinfo{person}{Atrey}.} \bibinfo{year}{2017}\natexlab{}.
\newblock \showarticletitle{Secure cloud-based image tampering detection and
  localization using POB number system}.
\newblock \bibinfo{journal}{\emph{TOMM}} \bibinfo{volume}{13},
  \bibinfo{number}{3} (\bibinfo{year}{2017}), \bibinfo{pages}{23}.
\newblock


\bibitem[\protect\citeauthoryear{Strehl, Ghosh, and Mooney}{Strehl
  et~al\mbox{.}}{2000}]%
        {strehl2000impact}
\bibfield{author}{\bibinfo{person}{Alexander Strehl}, \bibinfo{person}{Joydeep
  Ghosh}, {and} \bibinfo{person}{Raymond Mooney}.}
  \bibinfo{year}{2000}\natexlab{}.
\newblock \showarticletitle{Impact of similarity measures on web-page
  clustering}. In \bibinfo{booktitle}{\emph{AAAI}}, Vol.~\bibinfo{volume}{58}.
  \bibinfo{pages}{64}.
\newblock


\bibitem[\protect\citeauthoryear{Suwanda, Syahputra, and Zamzami}{Suwanda
  et~al\mbox{.}}{2020}]%
        {suwanda2020analysis}
\bibfield{author}{\bibinfo{person}{R Suwanda}, \bibinfo{person}{Z Syahputra},
  {and} \bibinfo{person}{EM Zamzami}.} \bibinfo{year}{2020}\natexlab{}.
\newblock \showarticletitle{Analysis of euclidean distance and manhattan
  distance in the K-means algorithm for variations number of centroid K}. In
  \bibinfo{booktitle}{\emph{Journal of Physics: Conference Series}},
  Vol.~\bibinfo{volume}{1566}. IOP Publishing, \bibinfo{pages}{012058}.
\newblock


\bibitem[\protect\citeauthoryear{Vaidya and Clifton}{Vaidya and
  Clifton}{2003}]%
        {vaidya2003privacy}
\bibfield{author}{\bibinfo{person}{Jaideep Vaidya} {and} \bibinfo{person}{Chris
  Clifton}.} \bibinfo{year}{2003}\natexlab{}.
\newblock \showarticletitle{Privacy-preserving k-means clustering over
  vertically partitioned data}. In \bibinfo{booktitle}{\emph{Proceedings of the
  ninth ACM SIGKDD international conference on Knowledge discovery and data
  mining}}. \bibinfo{pages}{206--215}.
\newblock


\bibitem[\protect\citeauthoryear{Veugen, Blom, de~Hoogh, and Erkin}{Veugen
  et~al\mbox{.}}{2015}]%
        {veugen2015secure}
\bibfield{author}{\bibinfo{person}{Thijs Veugen}, \bibinfo{person}{Frank Blom},
  \bibinfo{person}{Sebastiaan~JA de Hoogh}, {and} \bibinfo{person}{Zekeriya
  Erkin}.} \bibinfo{year}{2015}\natexlab{}.
\newblock \showarticletitle{Secure comparison protocols in the semi-honest
  model}.
\newblock \bibinfo{journal}{\emph{JSTSP}} \bibinfo{volume}{9},
  \bibinfo{number}{7} (\bibinfo{year}{2015}), \bibinfo{pages}{1217--1228}.
\newblock


\bibitem[\protect\citeauthoryear{Xing, Hu, Yu, and Cheng}{Xing
  et~al\mbox{.}}{2017}]%
        {xing2017mutual}
\bibfield{author}{\bibinfo{person}{Kai Xing}, \bibinfo{person}{Chunqiang Hu},
  \bibinfo{person}{Jiguo Yu}, {and} \bibinfo{person}{Xiuzhen Cheng}.}
  \bibinfo{year}{2017}\natexlab{}.
\newblock \showarticletitle{Mutual privacy preserving $ k $-means clustering in
  social participatory sensing}.
\newblock \bibinfo{journal}{\emph{Trans Ind. Informat.}} \bibinfo{volume}{13},
  \bibinfo{number}{4} (\bibinfo{year}{2017}), \bibinfo{pages}{2066--2076}.
\newblock


\bibitem[\protect\citeauthoryear{Yang, Liu, Chen, and Tong}{Yang
  et~al\mbox{.}}{2019}]%
        {yang2019federated}
\bibfield{author}{\bibinfo{person}{Qiang Yang}, \bibinfo{person}{Yang Liu},
  \bibinfo{person}{Tianjian Chen}, {and} \bibinfo{person}{Yongxin Tong}.}
  \bibinfo{year}{2019}\natexlab{}.
\newblock \showarticletitle{Federated machine learning: Concept and
  applications}.
\newblock \bibinfo{journal}{\emph{ACM Transactions on Intelligent Systems and
  Technology (TIST)}} \bibinfo{volume}{10}, \bibinfo{number}{2}
  (\bibinfo{year}{2019}), \bibinfo{pages}{1--19}.
\newblock


\bibitem[\protect\citeauthoryear{Yi and Zhang}{Yi and Zhang}{2013}]%
        {yi2013equally}
\bibfield{author}{\bibinfo{person}{Xun Yi} {and} \bibinfo{person}{Yanchun
  Zhang}.} \bibinfo{year}{2013}\natexlab{}.
\newblock \showarticletitle{Equally contributory privacy-preserving k-means
  clustering over vertically partitioned data}.
\newblock \bibinfo{journal}{\emph{Information systems}} \bibinfo{volume}{38},
  \bibinfo{number}{1} (\bibinfo{year}{2013}), \bibinfo{pages}{97--107}.
\newblock


\bibitem[\protect\citeauthoryear{Zhang and Zhang}{Zhang and Zhang}{2013}]%
        {zhang2013cryptanalysis}
\bibfield{author}{\bibinfo{person}{Wei-Wei Zhang} {and} \bibinfo{person}{Ke-Jia
  Zhang}.} \bibinfo{year}{2013}\natexlab{}.
\newblock \showarticletitle{Cryptanalysis and improvement of the quantum
  private comparison protocol with semi-honest third party}.
\newblock \bibinfo{journal}{\emph{Quantum information processing}}
  \bibinfo{volume}{12}, \bibinfo{number}{5} (\bibinfo{year}{2013}),
  \bibinfo{pages}{1981--1990}.
\newblock


\end{thebibliography}

\end{document}